%% file: main.tex
\documentclass[11pt,a4paper,logo,nonumbering]{map}

\usepackage{placeins}
\usepackage{tcolorbox}
\usepackage{algorithm}
\usepackage{algorithmic}
\usepackage{etoolbox}

\definecolor{policyrow}{HTML}{FCE8E6}
\definecolor{criticrow}{HTML}{E6F0FA}
\definecolor{maerow}{HTML}{E8E8E8}
\newtheorem{theorem}{Theorem}
\newtheorem{proposition}{Proposition}
\newtheorem{corollary}{Corollary}
\newtheorem{lemma}{Lemma}

\newcommand{\experimentplot}[2]{%
  \IfFileExists{#1}{\includegraphics[width=\linewidth]{#1}}{%
    \fbox{\parbox[c][2.6cm][c]{\dimexpr\linewidth-2\fboxsep-2\fboxrule\relax}{%
      \centering\small #2\par\medskip\textit{Results pending.}}}}}

\hypersetup{
  pdftitle={Multiscale Policy Optimization for Long-Horizon LLM Agents},
  pdfauthor={Tao Ren et al.},
  colorlinks=true,linkcolor=blue,citecolor=purple,urlcolor=blue
}

\title{\centering Multiscale Policy Optimization for Long-Horizon LLM Agents}
\makeatletter
\renewcommand{\@author}{%
  \centering\normalfont\fontsize{10}{13}\selectfont
  \mbox{Tao Ren\textsuperscript{1,*}},
  \mbox{Hui Yang\textsuperscript{1,*}},
  \mbox{Jia Deng\textsuperscript{2,*}},
  \mbox{Tingyang Chen\textsuperscript{2,*}},
  \mbox{Qiyao Wang\textsuperscript{2,*}},
  \mbox{Shuo Tang\textsuperscript{2,*}},\\
  \mbox{Ziyang Zeng\textsuperscript{2,*}},
  \mbox{Yuhua Jiang\textsuperscript{3}},
  \mbox{Guanghao Li\textsuperscript{3}},
  \mbox{Huatong Song\textsuperscript{4}},
  \mbox{Yining Li\textsuperscript{3}},
  \mbox{Yifan Luo\textsuperscript{2}},
  \mbox{Kunbao Ding\textsuperscript{2}},\\
  \mbox{Feng Chang\textsuperscript{2,\ensuremath{\dagger}}},
  \mbox{Chuan Hao\textsuperscript{2}},
  \mbox{Ran Tao\textsuperscript{2}},
  \mbox{Bryan Dai\textsuperscript{2}},
  \mbox{Jiafeng Liang\textsuperscript{6,\ensuremath{\dagger}}},
  \mbox{Jinyang Jiang\textsuperscript{5,\ensuremath{\dagger}}}\par\vspace{0.5em}
  {\small
  \textsuperscript{1}Peking University\quad
  \textsuperscript{2}IQuest Research\quad
  \textsuperscript{3}Tsinghua University\\
  \textsuperscript{4}Renmin University of China\quad
  \textsuperscript{5}The University of Hong Kong\quad
  \textsuperscript{6}Independent Researcher\par}
  \vspace{0.4em}
  {\small
  \textsuperscript{*}Equal contribution.\quad
  \textsuperscript{\ensuremath{\dagger}}Corresponding authors.\\
  Contact: \texttt{rtkenny@stu.pku.edu.cn}\par}%
}
\patchcmd{\maketitle}{\vskip20pt}{\vskip12pt}{}{}
\patchcmd{\maketitle}{\vskip30pt}{\vskip10pt}{}{}
\patchcmd{\abscontent}{\vspace{5ex}}{\vspace{1.5ex}}{}{}
\makeatother

\input{sections/abstract}

\begin{document}
\maketitle

\par\medskip
\noindent\begin{minipage}{\linewidth}
  \captionsetup{type=figure,font=small,hypcap=false}
  \centering
  \includegraphics[width=0.95\linewidth]{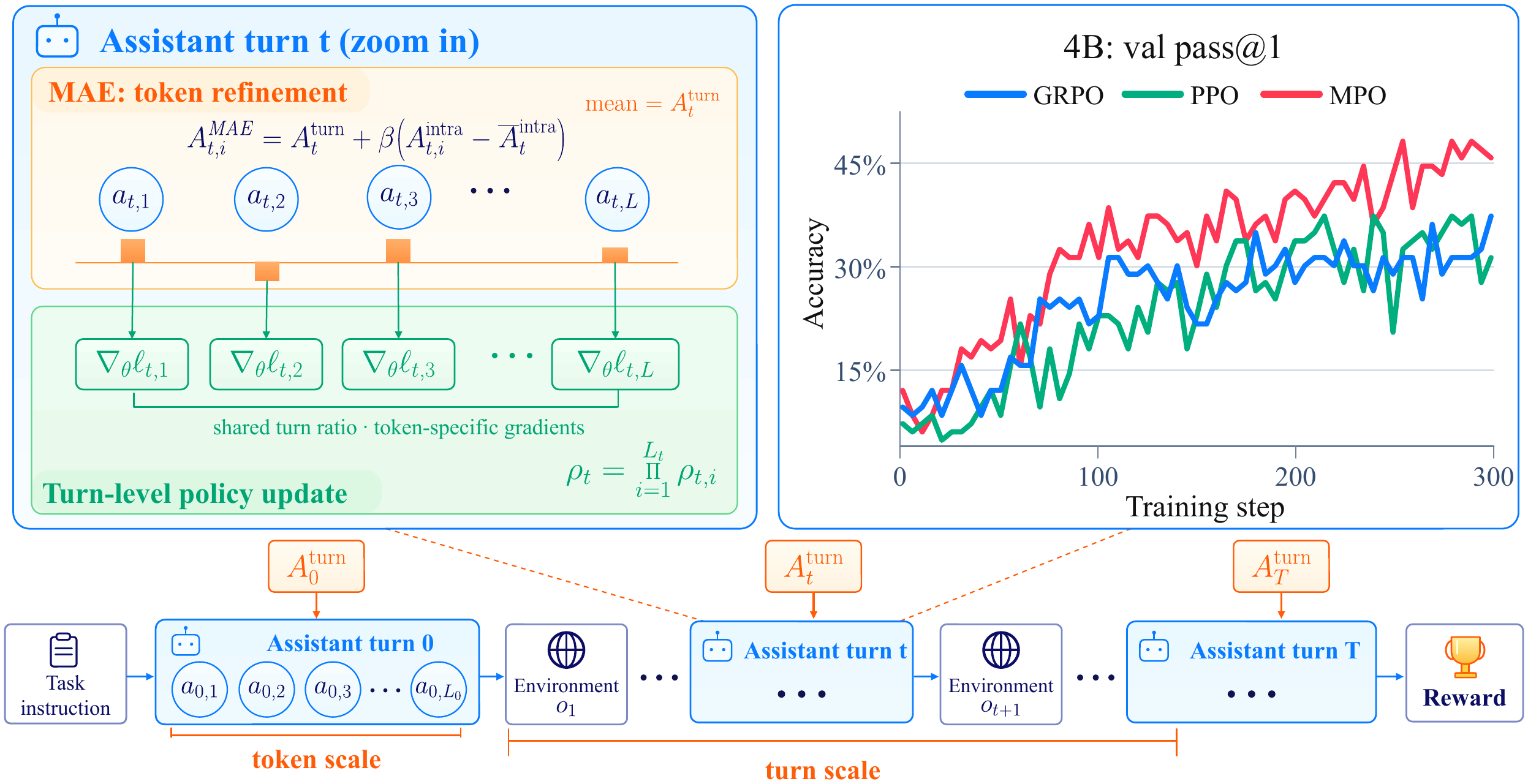}
  \caption{\textbf{Overview of Multiscale Policy Optimization (MPO).}
    Agents interact through complete assistant turns while generating actions
    token by token. MAE refines shared turn advantages into token-level credit
    while preserving each turn's mean. The policy update combines a shared
    complete-turn importance ratio with token-specific gradients.
    Top right: validation pass@1 during 4B training.}
  \label{fig:mpo-overview}
\end{minipage}
\clearpage

\input{sections/mpo}

\end{document}

%% file: sections/abstract.tex
\begin{abstract}
Long-horizon LLM agents interact with their environments through complete
assistant turns while generating each action token by token.
We observe that critic values vary primarily across turns, motivating
optimization anchored at turns
with finer credit allocation among tokens. We propose Multiscale Policy
Optimization (MPO), combining Multiscale Advantage Estimation (MAE) and
turn-level importance sampling to align credit assignment and policy
updates at the turn scale. Extensive experiments across search, tool-use,
and automation tasks with 4B and 30B-A3B models demonstrate consistent
gains over strong RL baselines.
Code is available at \url{https://github.com/IQuestLab/Multiscale-Policy-Optimization}.
\end{abstract}

%% file: sections/mpo.tex
\section{Introduction}
\label{sec:introduction}

An LLM agent interacts with its environment through complete assistant
turns, such as search queries, tool calls, and final responses, while
its policy generates each turn token by token. Learning from a terminal
task reward therefore involves two scale choices. Credit must connect
the final outcome to interactions across the trajectory and distinguish
the tokens within each interaction. Policy updates must also account
for changes in complete-output probabilities while retaining those
local distinctions. These choices motivate a policy-optimization
method that uses both the interaction and generation scales.

Two common families of RL methods leave a gap between these scales.
\textbf{Critic-free, group-relative methods}, including GRPO, DAPO,
and GSPO~\citep{shao2024deepseekmath,yu2025dapo,zheng2025gspo}, derive
one shared advantage per generated sequence from group-relative outcome
rewards. Despite differences in their update objectives, this shared
credit signal does not distinguish the contributions of individual
turns or tokens within a rollout. \textbf{Critic-based, token-level
methods}, including PPO, VC-PPO, VAPO, and
SAO~\citep{schulman2017ppo,yuan2025vcppo,yue2025vapo,hou2026sao}, use a learned
value function to estimate advantages along the token sequence.
Their common formulation treats each generated token as an action,
providing finer credit without explicitly separating within-turn
generation from turn-level environment interaction. Agentic RL therefore
calls for credit assignment that captures both the contribution of each
complete interaction and the relative contributions of tokens within it.

\begin{figure}[t]
 \centering
 \includegraphics[width=0.55\linewidth]{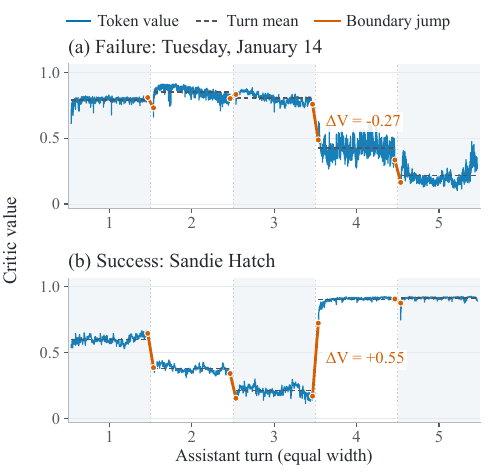}
 \caption{\textbf{Critic values exhibit structure at the turn scale.}
 A failed trajectory (a) and a successful trajectory (b) both exhibit
 relatively stable values within turns and pronounced changes across
 turn boundaries.}
 \label{fig:critic-turn-motivation}
\end{figure}

\paragraph{Observing credit structure across turns.}
Figure~\ref{fig:critic-turn-motivation} illustrates learned values that
vary within turns and exhibit larger changes across some environment
interactions. Across 2,048 trajectories and 16,323 turns from eight
checkpoint batches of one search-agent run, differences between turn
means account for a median of \textbf{89.5\%} of the critic's
token-weighted within-trajectory value variance. Between-turn variance exceeds
within-turn variance in \textbf{93.8\%} of trajectories. This structure
motivates propagating long-range credit across turns while retaining
local token information. Appendix~\ref{app:critic-observation} provides
additional statistics and detailed analyses of critic behavior in
multi-turn agent interactions.

\paragraph{Multiscale Policy Optimization.}
Treating complete assistant turns as macro actions, we establish a
turn-level improvement bound and show that token surrogates incur an
additional within-turn distribution-shift penalty for exact advantages
under the same KL constraints. Guided by this analysis and the observed
critic structure, we develop Multiscale Policy Optimization (MPO),
combining multiscale credit assignment with turn-level importance
weighting. Multiscale Advantage Estimation (MAE) uses turn-level GAE to
propagate credit across interactions, then refines token credit through
a centered local correction that preserves the turn mean. We derive
this allocation as a constrained projection using a single critic.
Turn-level importance weighting applies the complete-turn probability
ratio while retaining each token's MAE advantage in its policy gradient
through a stop-gradient construction.

Our contributions are threefold:
\begin{itemize}
 \item We analyze policy improvement for LLM agents at turn and token
 scales, deriving a turn-level improvement bound and identifying the
 additional discrepancy that token-level surrogates introduce through
 within-turn distribution shift.
 \item We propose MPO, which combines multiscale advantage estimation
 (MAE) with turn-level importance sampling to align credit assignment
 and policy updates with complete turns while preserving token-specific
 credit.
 \item We demonstrate consistent performance gains over strong RL
 baselines across search, tool-use, and automation tasks with both
 4B and 30B-A3B models.
\end{itemize}

\section{Related Work}
\label{sec:related-work}

\subsection{Tool use and multi-turn agent learning}
\label{sec:related-agents}

Early work connected language-model reasoning with actions and external
tools \citep{yao2023react,schick2023toolformer,shen2023hugginggpt}.
Reflection, memory, planning, reusable skills, and multi-agent
collaboration supported adaptation
from experience and sustained interaction
\citep{shinn2023reflexion,park2023generative,wang2023voyager,wu2024autogen,hong2024metagpt},
establishing core interaction patterns for agents operating over
multiple turns.

Modern coding agents and execution harnesses integrate repository
editing, code execution, testing, and browsing to resolve software
issues and complete practical web and research tasks
\citep{yang2024sweagent,zhang2024autocoderover,antoniades2025swesearch,wang2026openhands_sdk,soni2026openhands_versa}.
Coding agents also support algorithm discovery
\citep{novikov2025alphaevolve}.
Agents also use execution feedback and accumulated experience to
refine their tools, workflows, and memory
\citep{hu2025adas,robeyns2025sica,xia2025livesweagent,zhang2026dgm,ren2026tmem}.
These systems motivate optimizing agents across complete interaction
turns and the tokens generated within each turn.

\subsection{RL in LLM Posttraining}
\label{sec:related-post-training}

Classical policy-gradient methods optimize expected return through
reward- or advantage-weighted updates
\citep{williams1992reinforce,sutton1999policygradient}.
Advantage estimation and conservative, trust-region, or proximal updates
provide foundations for stable policy optimization
\citep{kakade2002conservative,schulman2015trpo,schulman2015gae,schulman2017ppo}.

Building on GRPO \citep{shao2024deepseekmath}, DeepSeek-R1
\citep{guo2025deepseekr1} demonstrated the potential of large-scale RL
for reasoning. Subsequent methods improve importance weighting and
trust-region control
\citep{yu2025dapo,zheng2025gspo,gao2025sapo,qi2026dppo},
as well as critic learning and long-response optimization
\citep{yuan2025vcppo,yue2025vapo,fan2025tppo,gong2026sae}.
Risk-based objectives further address exploration and solution diversity
\citep{ren2026riskpo,jiang2026risksensitive}.
Policy optimization also targets visually grounded causal reasoning
\citep{liang2026perception}.

RL also supports multi-turn agents that learn through tool use and
environment feedback
\citep{zhou2024archer,jin2025searchr1,luo2025agentlightning}.
Long horizons and sparse task rewards make credit assignment across
decisions especially important
\citep{feng2025gigpo,dong2026arpo,zhu2026eibench,zhang2026hygae}.
Recent work further develops stable policy updates for asynchronous
agent training \citep{hou2026sao,hu2026flashreinforce}.
MPO aligns advantage estimation and importance correction with complete
turns, while retaining centered token-level credit within each turn.

\section{Agentic RL Across Turns and Tokens}
\label{sec:preliminary}

\subsection{Agentic RL with Terminal Reward}
\label{sec:agentic-setting}

An LLM agent solves a task by alternating between assistant outputs and
environment observations. Its interaction trajectory is
\begin{equation}
 \tau=(o_0,a_0,o_1,a_1,\ldots,o_T,a_T).
 \label{eq:trajectory}
\end{equation}
Here $o_0$ is the task prompt, $a_t$ is an assistant output such as a tool
call or final response, and $o_{t+1}$ is the environment feedback following
a nonterminal output. The trajectory ends after $a_T$. Each observation
and assistant output is a token sequence:
\begin{equation}
 o_t=(o_{t,1},\ldots,o_{t,M_t}),\qquad
 a_t=(a_{t,1},\ldots,a_{t,L_t}),
 \label{eq:token-sequences}
\end{equation}
Here $M_t$ and $L_t\geq1$ are the respective sequence lengths, and
$t=0,\ldots,T$ indexes assistant turns.
Let $h_t=(o_0,a_0,\ldots,o_t)$ denote the history at the start of turn $t$,
with $h_0=(o_0)$.
The history before generating token $a_{t,i}$ is
\begin{equation}
 h_{t,i}=(h_t,a_{t,<i}),\qquad
 a_{t,<i}=(a_{t,1},\ldots,a_{t,i-1}).
 \label{eq:token-history}
\end{equation}
The assistant policy factorizes autoregressively as
$\pi_\theta(a_t\mid h_t)=\prod_{i=1}^{L_t}\pi_\theta(a_{t,i}\mid h_{t,i})$.
Environment observations follow
$o_{t+1}\sim P_{\mathrm{env}}(\cdot\mid h_t,a_t)$ and become part of
the next assistant turn's context. The task distribution and environment
dynamics are independent of $\theta$; together with the policy, they
induce the trajectory distribution $p_\theta$.

The agent receives a single reward $R(\tau)$ when the interaction
terminates, with no intermediate rewards. Its objective is
$\mathcal J(\theta)=\mathbb E_{\tau\sim p_\theta}[R(\tau)]$.
Only the final assistant token of the entire trajectory receives this
reward; every earlier token has zero reward:
\begin{equation}
 r_{t,i}:=\begin{cases}
 R(\tau),&t=T,\ i=L_T,\\
 0,&\text{otherwise}.
 \end{cases}
 \label{eq:terminal-reward}
\end{equation}

\subsection{Turns as Macro Decisions}
\label{sec:macro-setting}

The environment receives complete assistant outputs as actions, while
the policy generates each output autoregressively. We treat $a_t$ as
the macro action and retain $h_t$ as its state: the history includes
the time index and any remaining interaction or token budget. The
turn-ending decision is included in $a_t$. A fixed maximum token length
$L_{\max}$ and any forced termination rule are shared by both policies.
There is no environment interaction between tokens inside one assistant
output. The macro transition is therefore $h_t\xrightarrow{a_t}h_{t+1}$,
with the same environment law defined above.

We use the original policy notation at both scales:
\begin{equation}
 \pi_\theta(a_t\mid h_t)
 =\prod_{i=1}^{L_t}\pi_\theta(a_{t,i}\mid h_{t,i}).
 \label{eq:macro-policy}
\end{equation}
The complete-turn probability on the left is induced by the token
probabilities on the right, using the same model parameters. This
factorization lets us evaluate changes in either a complete output or
an individual token distribution while retaining one policy.
The objective remains
\begin{equation}
 \mathcal J(\theta)=\mathbb E_{\tau\sim p_\theta}[R(\tau)].
 \label{eq:macro-objective}
\end{equation}
When discussing two fixed policies, we also write $\mathcal J(\pi)$
and $\mathcal J(\pi')$ for this same objective.

\begin{lemma}[Objective consistency]
\label{lem:objective-consistency}
Under a lossless token refinement, identical reward and stopping rules,
and $\gamma=1$, the macro and token descriptions have the same task
return. Moreover,
\begin{equation}
 \nabla_\theta\log\pi_\theta(a_t\mid h_t)
 =\sum_{i=1}^{L_t}\nabla_\theta
              \log\pi_\theta(a_{t,i}\mid h_{t,i}).
 \label{eq:macro-score}
\end{equation}
\end{lemma}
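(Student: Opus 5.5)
We establish the two claims separately: first that both descriptions induce the same trajectory law and hence the same return, then the score identity. For the first, we make precise what ``lossless token refinement'' means here. Each macro step $h_t\xrightarrow{a_t}h_{t+1}$ is replaced by the chain $h_{t,1}\to h_{t,2}\to\cdots\to h_{t,L_t}$, followed by an environment step. There is no environment interaction between tokens, and $h_{t,i}=(h_t,a_{t,<i})$ keeps the full prefix. So the token chain is a deterministic expansion of the macro step, and the map $\tau\mapsto$ (token sequence) is a bijection between macro trajectories and token trajectories. The turn-ending decision and the shared $L_{\max}$ and forced-termination rules are part of $a_t$. Therefore the token chain stops exactly where the macro turn ends, and the same holds for the whole trajectory.

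Next we compare probabilities under the two descriptions. The token-level probability of a trajectory is
\[
\prod_t \Bigl(\prod_{i=1}^{L_t}\pi_\theta(a_{t,i}\mid h_{t,i})\Bigr)P_{\mathrm{env}}(o_{t+1}\mid h_t,a_t),
\]
which by \eqref{eq:macro-policy} equals the macro probability $\prod_t \pi_\theta(a_t\mid h_t)P_{\mathrm{env}}(o_{t+1}\mid h_t,a_t)$. Both chains start from the same task distribution over $o_0$, so $p_\theta$ coincides under the two descriptions.

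We then compare returns. With $\gamma=1$, the token return is $\sum_{t,i} r_{t,i}$. By \eqref{eq:terminal-reward} this sum equals $R(\tau)$, since only the final token carries the reward. The macro return is also $R(\tau)$, delivered at turn $T$. With $\gamma=1$ no discount factor depends on the differing step counts ($\sum_t L_t$ versus $T+1$). This is exactly where $\gamma=1$ is needed; for $\gamma<1$ the two returns would differ by $\gamma^{\sum_t L_t}$ versus $\gamma^{T}$. Taking expectations under the common law gives equal task returns, so \eqref{eq:macro-objective} is the same objective in both descriptions.

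For the score identity \eqref{eq:macro-score}, we take logarithms of \eqref{eq:macro-policy}. On any $(h_t,a_t)$ with $\pi_\theta(a_t\mid h_t)>0$, every factor is positive, so
\[
\log\pi_\theta(a_t\mid h_t)=\sum_i\log\pi_\theta(a_{t,i}\mid h_{t,i}).
\]
The length $L_t$ is a function of the sampled $a_t$ and does not depend on $\theta$. Hence we can differentiate the finite sum term by term, assuming each token log-probability is differentiable in $\theta$ (as it is for a softmax parameterization).

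The main obstacle is purely definitional: making ``lossless'' precise enough that the termination event is carried identically by both descriptions. This requires, in particular, that the end-of-turn token and any $L_{\max}$ truncation be treated as part of $a_t$ under the same rule. Once that is fixed, the bijection and the product formula do all the work.
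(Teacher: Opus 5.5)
Your proposal is correct and follows essentially the same route as the paper. Like the paper, you group the token factors within each turn to show both descriptions induce the same trajectory law, get equal returns from $\gamma=1$ with common reward and stopping rules, and differentiate the finite log-sum for a fixed output of positive probability, where $L_t$ is fixed once the output is fixed.
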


The proof is given in Appendix~\ref{app:macro-pdl-proof}.

Both descriptions assign the same probability and return to each
complete interaction. Their common task objective allows us to compare
surrogates built at different scales. The choice of advantage weights
and the choice of importance ratios are separate parts of this
construction; the following analysis makes the surrogate difference
explicit.
The theoretical analysis assumes a fixed turn horizon of $T+1$.

\section{Policy Improvement Across Turns and Tokens}
\label{sec:theoretical-guarantee}

We analyze how turn-level and token-level surrogates relate to the same
task-return objective. Following the surrogate-and-distribution-shift
analysis of TRPO \citep{schulman2015trpo}, we first establish a
finite-horizon turn-level improvement bound, then identify the
additional discrepancy introduced by token surrogates. These results
motivate combining multiscale credit assignment with turn-level
importance weighting in MPO. Full proofs and the extension to estimated
turn advantages are provided in Appendix~\ref{app:improvement-proofs}.

\subsection{Turn-Level Policy Improvement}
\label{sec:exact-improvement}

Fix an old policy $\pi$ and a candidate $\pi'$ on the same turn process,
with horizon $T+1$, $\gamma=1$, and identical initial-history and
environment distributions. Define the time-indexed occupancy measure
at turn $t$ as $d_t^\pi(h):=\Pr_{\pi}(h_t=h)$ for $t=0,\ldots,T$,
where the probability is taken over the initial task, assistant outputs
sampled from $\pi$, and environment responses. Thus $d_t^\pi$ is the
distribution of histories at the start of turn $t$.
At each active history, define the exact old-policy values by
\begin{align}
 V_t^\pi(h)&=\mathbb E_\pi[R(\tau)\mid h_t=h],\notag\\
 Q_t^\pi(h,a)&=\mathbb E_\pi[R(\tau)\mid h_t=h,a_t=a],\notag\\
 A_t^\pi(h,a)&=Q_t^\pi(h,a)-V_t^\pi(h).
 \label{eq:macro-exact-values}
\end{align}
These expectations mean continuing $\pi$ from the supplied history;
the values must also be defined at histories reached by $\pi'$.
Set $V_{T+1}^\pi=0$ and all post-termination values to zero. Assume
all expectations below are finite; bounded task rewards suffice.
Common policy support is assumed for importance ratios, but is not
needed for the performance-difference or TV inequalities themselves.
The Bellman identity gives $\mathbb E_{a\sim\pi}A_t^\pi(h,a)=0$.
\begin{lemma}[Macro performance difference and importance ratios]
\label{lem:macro-pdl}
In the finite-horizon setting above, the performance difference satisfies
\begin{equation}
 \mathcal J(\pi')-\mathcal J(\pi)
 =\sum_{t=0}^{T}
 \mathbb E_{\substack{h_t\sim d_t^{\pi'}\\
                      a_t\sim\pi'(\cdot\mid h_t)}}
 \!\left[A_t^\pi(h_t,a_t)\right].
 \label{eq:macro-pdl}
\end{equation}
Define the surrogate under the old-policy occupancy measure by
$\mathcal S_M(\pi';\pi)
:=\sum_{t=0}^{T}\mathbb E_{d_t^\pi,\pi'}[A_t^\pi]$.
If $\pi'(\cdot\mid h)$ is absolutely continuous with respect to
$\pi(\cdot\mid h)$ at old-policy histories, then
$\mathcal S_M(\pi';\pi)
=\mathbb E_\pi\sum_{t=0}^{T}(\rho_t-1)A_t^\pi$,
where $\rho_t=\pi'(a_t\mid h_t)/\pi(a_t\mid h_t)$ is the
complete-turn importance ratio.
\end{lemma}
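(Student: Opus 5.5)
The plan is to prove the performance-difference identity by a telescoping argument along trajectories drawn from the new policy, and then obtain the importance-ratio form by a change of measure at each fixed old-policy history.

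\textbf{Telescoping.} Take $\tau\sim p_{\pi'}$. The initial-history distribution is shared and $V_{T+1}^\pi=0$, so
\begin{equation*}
 R(\tau)-V_0^\pi(h_0)
 =\sum_{t=0}^{T}\bigl(V_{t+1}^\pi(h_{t+1})-V_t^\pi(h_t)\bigr)
 +\bigl(R(\tau)-V_{T+1}^\pi(h_{T+1})\bigr),
\end{equation*}
where the post-termination convention makes the last bracket equal to $R(\tau)$. It is cleaner to fold the reward into the final step. Set $V_{T+1}^\pi(h_{T+1}):=R(\tau)$ along the realized trajectory, reading $h_{T+1}$ as the completed history. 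Then the sum telescopes exactly to $R(\tau)-V_0^\pi(h_0)$.

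Taking expectations under $p_{\pi'}$ gives $\mathbb E_{\pi'}[V_0^\pi(h_0)]=\mathbb E_{\pi}[V_0^\pi(h_0)]=\mathcal J(\pi)$, because $h_0=o_0$ has a $\theta$-independent law. The left side therefore becomes $\mathcal J(\pi')-\mathcal J(\pi)$.

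\textbf{Identifying each summand.} I condition on $(h_t,a_t)$ in the $t$-th term. The environment law $P_{\mathrm{env}}(\cdot\mid h_t,a_t)$ is common to both policies, and the reward depends only on the completed history. Hence
\begin{equation*}
 \mathbb E_{\pi'}\bigl[V_{t+1}^\pi(h_{t+1})\mid h_t,a_t\bigr]=Q_t^\pi(h_t,a_t).
\end{equation*}
This is exactly the one-step Bellman identity defining $Q^\pi$ from $V^\pi$. It holds because continuing $\pi$ from $h_{t+1}$ after one environment step is the same as continuing $\pi$ from $(h_t,a_t)$. The $t$-th summand is then $\mathbb E_{h_t\sim d_t^{\pi'},\,a_t\sim\pi'}[A_t^\pi(h_t,a_t)]$, which proves \eqref{eq:macro-pdl}.

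\textbf{Main obstacle: terminal and early-stopping bookkeeping.} This step needs the most care. A trajectory may end before turn $T$ through a final response or a forced stop, yet the analysis uses a fixed horizon $T+1$. I would treat terminated histories as absorbing, with all post-termination values and advantages equal to zero. The Bellman identity and the telescoping then remain valid term by term, and the reward is collected exactly once.

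I would also check two further points. First, $V^\pi$ and $Q^\pi$ must be defined at histories reached by $\pi'$, which the setup assumes. Second, all expectations must be finite, which follows from bounded rewards and justifies the exchanges of sums and expectations via Fubini.

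\textbf{Importance-ratio form.} Fix $t$ and an old-policy history $h$. Absolute continuity of $\pi'(\cdot\mid h)$ with respect to $\pi(\cdot\mid h)$ gives
\begin{equation*}
 \mathbb E_{a\sim\pi'}\bigl[A_t^\pi(h,a)\bigr]=\mathbb E_{a\sim\pi}\bigl[\rho_t A_t^\pi(h,a)\bigr],
\end{equation*}
since $\rho_t$ is the Radon--Nikodym derivative, here the complete-turn ratio of sequence probabilities. The Bellman identity $\mathbb E_{a\sim\pi}A_t^\pi(h,a)=0$ lets me subtract $1$ from the ratio at no cost. Integrating over $h\sim d_t^\pi$ and summing over $t$ gives $\mathcal S_M(\pi';\pi)=\mathbb E_\pi\sum_{t}(\rho_t-1)A_t^\pi$.

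Finally, Lemma~\ref{lem:objective-consistency} is what identifies $\pi(a_t\mid h_t)$ with the product of token probabilities. This ensures $\rho_t$ is the product of per-token ratios.
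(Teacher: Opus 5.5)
Your proposal is correct and follows essentially the same route as the paper: pathwise telescoping of one-turn value increments along $\tau\sim\pi'$, the one-step Bellman identity (common environment and reward law) to identify each conditional increment with $A_t^\pi(h_t,a_t)$, equality of the initial-history laws to turn $\mathbb E_{\pi'}[V_0^\pi(h_0)]$ into $\mathcal J(\pi)$, and a change of measure plus $\mathbb E_{a\sim\pi}A_t^\pi=0$ for the ratio form. The only difference is bookkeeping: you fold the reward into a terminal value $V_{T+1}^\pi:=R(\tau)$, while the paper keeps $\sum_i r_{t,i}$ inside the residual $\delta_t^\pi$ with $V_{T+1}^\pi=0$, and that choice is what handles early termination term by term; combining $V_{T+1}^\pi:=R(\tau)$ with zero post-termination values, as you propose, would credit the reward to turn $T$ rather than to the actual terminal turn, which leaves the total correct but breaks the identification of the individual summands with $A_t^\pi$.
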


$\mathcal S_M$ is an improvement, so $\mathcal S_M(\pi;\pi)=0$;
the corresponding absolute surrogate is $\mathcal J(\pi)+\mathcal S_M$.

The proof is given in Appendix~\ref{app:macro-pdl-proof}.

\begin{theorem}[Monotonic improvement at the turn scale]
\label{thm:macro-improvement}
In the finite-horizon setting above, suppose
$|A_t^\pi(h,a)|\le A_{\max}$ at all admissible turn-start histories
and complete outputs. Set
\[
 \alpha_M=\sup_{t,h}D_{\mathrm{TV}}
       (\pi'(\cdot\mid h),\pi(\cdot\mid h)),
\]
where $D_{\mathrm{TV}}(p,q)=\frac12\|p-q\|_1$.
Here the supremum ranges over turn-start histories, and the distributions
are over complete outputs $a_t$.
Then
\begin{equation}
 \mathcal J(\pi')-\mathcal J(\pi)\geq
 \mathcal S_M-2A_{\max}T(T+1)\alpha_M^2.
 \label{eq:exact-macro-bound}
\end{equation}
Thus a nonnegative right-hand side certifies nondecrease of the task
return; a positive right-hand side certifies strict improvement.
\end{theorem}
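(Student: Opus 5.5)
We follow the TRPO/CPI template at the turn scale, starting from Lemma~\ref{lem:macro-pdl}. Subtracting the surrogate from the performance difference gives
\[
 \mathcal J(\pi')-\mathcal J(\pi)-\mathcal S_M
 =\sum_{t=0}^{T}\Bigl(\mathbb E_{h\sim d_t^{\pi'}}\bigl[\bar A_t(h)\bigr]
 -\mathbb E_{h\sim d_t^{\pi}}\bigl[\bar A_t(h)\bigr]\Bigr),
 \qquad
 \bar A_t(h):=\mathbb E_{a\sim\pi'(\cdot\mid h)}A_t^\pi(h,a).
\]
It therefore suffices to bound $|\bar A_t(h)|$ and the distance between the occupancy measures $d_t^{\pi'}$ and $d_t^\pi$. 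Since a function bounded in absolute value by $B$ changes its expectation by at most $2B\,D_{\mathrm{TV}}$ between two distributions, the $t$-th term is at most $2\sup_h|\bar A_t(h)|\cdot D_{\mathrm{TV}}(d_t^{\pi'},d_t^\pi)$ in absolute value. This produces a factor $\alpha_M^2$: one $\alpha_M$ from the advantage and one from the occupancy shift.

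\textbf{Step 1 (advantage factor).} Because $\mathbb E_{a\sim\pi}A_t^\pi(h,a)=0$ by the Bellman identity, we have $\bar A_t(h)=\sum_a(\pi'(a\mid h)-\pi(a\mid h))A_t^\pi(h,a)$. Hence $|\bar A_t(h)|\le 2\alpha_M A_{\max}$. This uses only the bound $|A^\pi_t|\le A_{\max}$ at admissible histories, which include histories reached by $\pi'$.

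\textbf{Step 2 (occupancy factor).} Couple the two processes turn by turn with the maximal coupling of $\pi'(\cdot\mid h)$ and $\pi(\cdot\mid h)$, so that the chosen outputs agree with probability at least $1-\alpha_M$ at each shared history. Use the same initial task and, once the histories agree, the same environment randomness. The histories at turn $t$ can differ only if some turn $s<t$ disagreed, so a union bound gives $D_{\mathrm{TV}}(d_t^{\pi'},d_t^\pi)\le 1-(1-\alpha_M)^t\le t\alpha_M$. An equivalent route avoids coupling: use the telescoping/data-processing inequality $D_{\mathrm{TV}}(d_{t+1}^{\pi'},d_{t+1}^\pi)\le D_{\mathrm{TV}}(d_t^{\pi'},d_t^\pi)+\alpha_M$, which holds because $P_{\mathrm{env}}$ is shared and each step can add at most $\alpha_M$. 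For $t=0$ the distance is zero because the initial distributions coincide.

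\textbf{Step 3 (summation).} Combining the two factors, each term is at most $2\cdot 2\alpha_M A_{\max}\cdot t\alpha_M=4A_{\max}t\alpha_M^2$. Summing over $t=0,\ldots,T$ gives $4A_{\max}\alpha_M^2\,T(T+1)/2=2A_{\max}T(T+1)\alpha_M^2$, which is exactly the constant in \eqref{eq:exact-macro-bound}. The lower bound then follows, and the certification remark is immediate.

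The main obstacle is Step 2. The coupling must be constructed carefully so that the environment transitions are shared when histories agree. Histories reached only by $\pi'$ must also be handled, which is where the assumption that values are defined off the old policy's support is used. Termination or absorbing post-termination states need zero values so the fixed-horizon telescoping remains valid; I would treat termination as an absorbing history with zero advantage to keep the turn index fixed at $T+1$.
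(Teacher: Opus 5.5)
Your proposal is correct and follows the paper's proof essentially step for step: the same exact decomposition of $\mathcal J(\pi')-\mathcal J(\pi)-\mathcal S_M$ from Lemma~\ref{lem:macro-pdl}, the same zero-mean bound $|\mathbb E_{a\sim\pi'}A_t^\pi(h,a)|\le 2A_{\max}\alpha_M$, the occupancy bound $D_{\mathrm{TV}}(d_t^{\pi'},d_t^\pi)\le t\alpha_M$, and the same summation to $2A_{\max}T(T+1)\alpha_M^2$. The one difference is in the occupancy bound: the paper proves it by your ``equivalent route'', the one-step recursion obtained by adding and subtracting $d_t^{\pi'}(h)\pi(a\mid h)$ under the shared $P_{\mathrm{env}}$, rather than by coupling; your coupling gives the slightly sharper $1-(1-\alpha_M)^t$ before relaxing to $t\alpha_M$, so the final constant is unchanged.
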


Appendices~\ref{app:macro-pdl-proof}--\ref{app:macro-bound-proof}
provide the full performance-difference derivation and proof of
Theorem~\ref{thm:macro-improvement}.

The bound certifies improvement when the exact surrogate gain covers
the distribution-shift penalty. Its surrogate combines a turn-level
advantage with a complete-output importance ratio, connecting credit
at interaction boundaries to changes in the probability of each turn.

\subsection{Additional Cost of Token Surrogates}
\label{sec:scale-conversion}

The autoregressive parameterization supports the macro surrogate through
$\rho_t=\prod_i\rho_{t,i}$, where
$\rho_{t,i}=\pi'(a_{t,i}\mid h_{t,i})/\pi(a_{t,i}\mid h_{t,i})$.
The product ratio corrects the entire turn distribution. A separate
token ratio corrects only the next-token distribution at an old-policy
prefix, so replacing the turn surrogate with token terms changes the
prefix distributions used to evaluate credit.

\begin{theorem}[The additional cost of token surrogates]
\label{thm:scale-penalties}
Let $A_{t,i}^{\pi,\mathrm{tok}}$ be the exact token advantage, and let
$A_{\max}^{\mathrm{tok}}<\infty$ be a uniform bound on its absolute value
over all admissible token prefixes and next tokens, so that
$|A_{t,i}^{\pi,\mathrm{tok}}(h_{t,i},a_{t,i})|\le A_{\max}^{\mathrm{tok}}$.
Set
$\alpha_T=\sup_h D_{\mathrm{TV}}(\pi'(\cdot\mid h),\pi(\cdot\mid h))$,
where $h$ ranges over token prefixes and the distributions are over next
tokens. Define
\[
 \mathcal S_T=\mathbb E_\pi\sum_t\sum_i
              (\rho_{t,i}-1)A_{t,i}^{\pi,\mathrm{tok}}.
\]
For $L_t\le L_{\max}$,
\begin{equation}
 |\mathcal S_M-\mathcal S_T|
 \le2(T+1) A_{\max}^{\mathrm{tok}}L_{\max}(L_{\max}-1)\alpha_T^2.
 \label{eq:within-turn-shift}
\end{equation}
Consequently,
\begin{equation}
 \begin{aligned}
 \mathcal J(\pi')-\mathcal J(\pi)\ge{}&\mathcal S_T
       -2A_{\max}T(T+1)\alpha_M^2\\
       &-2(T+1) A_{\max}^{\mathrm{tok}}L_{\max}(L_{\max}-1)\alpha_T^2.
 \end{aligned}
 \label{eq:token-bridge-bound}
\end{equation}
\end{theorem}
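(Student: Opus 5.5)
The plan is to treat each assistant turn as a finite-horizon token-level MDP nested inside the macro process and apply the TRPO-style argument of Theorem~\ref{thm:macro-improvement} within each turn. We first define the exact token advantage as the old-policy advantage with the turn history fixed. Concretely, $A_{t,i}^{\pi,\mathrm{tok}}(h_{t,i},a_{t,i})=Q^{\pi,\mathrm{tok}}_{t,i}(h_{t,i},a_{t,i})-V^{\pi,\mathrm{tok}}_{t,i}(h_{t,i})$, where the continuation follows $\pi$ for the remaining tokens and turns. The no-interaction property of Section~\ref{sec:macro-setting} makes token $i+1$'s prefix a deterministic function of $(h_{t,i},a_{t,i})$. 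Hence $Q^{\pi,\mathrm{tok}}_{t,i}=V^{\pi,\mathrm{tok}}_{t,i+1}$ inside a turn, and $V^{\pi,\mathrm{tok}}_{t,L_t+1}=Q_t^\pi(h_t,a_t)$ at the turn end. The within-turn advantages therefore telescope:
\[
A_t^\pi(h_t,a_t)=\sum_{i=1}^{L_t}A_{t,i}^{\pi,\mathrm{tok}}(h_{t,i},a_{t,i}).
\]
For variable length, pad with a dummy stop action carrying zero advantage up to $L_{\max}$.

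\textbf{Step 1: per-turn performance difference.} For fixed $h_t$, the telescoping above is exactly a performance-difference identity at token scale. It gives
\[
\mathbb E_{a_t\sim\pi'(\cdot\mid h_t)}[A_t^\pi(h_t,a_t)]=\sum_{i}\mathbb E_{h_{t,i}\sim d_{t,i}^{\pi'}(\cdot\mid h_t),\,a_{t,i}\sim\pi'}[A_{t,i}^{\pi,\mathrm{tok}}],
\]
where $d_{t,i}^{\pi'}(\cdot\mid h_t)$ is the prefix distribution generated by $\pi'$. This is the argument of Lemma~\ref{lem:macro-pdl} repeated with tokens as steps and turn-end value $Q_t^\pi$. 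Averaging over $h_t\sim d_t^\pi$ and summing over $t$ yields $\mathcal S_M$. The term $\mathcal S_T$ is the same expression with the prefix distribution replaced by the old-policy prefix distribution $d_{t,i}^{\pi}(\cdot\mid h_t)$, after rewriting $\mathbb E_\pi[(\rho_{t,i}-1)A]$ as $\mathbb E_{a\sim\pi'}A$ using $\mathbb E_{a\sim\pi}A^{\pi,\mathrm{tok}}=0$. So
\[
\mathcal S_M-\mathcal S_T=\sum_t\mathbb E_{h_t\sim d_t^\pi}\sum_i\Big(\mathbb E_{d^{\pi'}_{t,i}}-\mathbb E_{d^{\pi}_{t,i}}\Big)\bar A_{t,i}(h_{t,i}),
\]
where $\bar A_{t,i}(h)=\mathbb E_{a\sim\pi'(\cdot\mid h)}A_{t,i}^{\pi,\mathrm{tok}}(h,a)$.

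\textbf{Step 2: bounding each difference.} This step carries the constant and is where I expect the main difficulty. Since $\mathbb E_{\pi}A^{\pi,\mathrm{tok}}=0$, we have $|\bar A_{t,i}(h)|\le 2\alpha_T A_{\max}^{\mathrm{tok}}$. By the coupling argument from TRPO, the two prefix processes, started at the same $h_t$ and run for $i-1$ tokens, can be coupled so that they disagree with probability at most $1-(1-\alpha_T)^{i-1}\le (i-1)\alpha_T$. This gives $\|d^{\pi'}_{t,i}-d^\pi_{t,i}\|_1\le 2(i-1)\alpha_T$. Each term is then at most $4(i-1)\alpha_T^2A_{\max}^{\mathrm{tok}}$. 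Summing over $i=1,\dots,L_{\max}$ gives $4\alpha_T^2A^{\mathrm{tok}}_{\max}\cdot L_{\max}(L_{\max}-1)/2$, and summing over the $T+1$ turns gives exactly \eqref{eq:within-turn-shift}. This matches the macro constant $2A_{\max}T(T+1)$, which arises the same way with $\sum_{t\le T}t=T(T+1)/2$. That agreement reassures me the intended bookkeeping is this one.

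The care needed is in three places. The padding must not violate the uniform bound $A_{\max}^{\mathrm{tok}}$, which holds because dummy steps have zero advantage. The TV supremum must range over prefixes reachable under both policies. The coupling must be done conditionally on $h_t$, so that no cross-turn shift enters the bound; that shift is already accounted for by $\alpha_M$.

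\textbf{Step 3: combine.} Take $\mathcal J(\pi')-\mathcal J(\pi)\ge \mathcal S_M-2A_{\max}T(T+1)\alpha_M^2$ from Theorem~\ref{thm:macro-improvement}. Substitute $\mathcal S_M\ge\mathcal S_T-|\mathcal S_M-\mathcal S_T|$ to obtain \eqref{eq:token-bridge-bound}.
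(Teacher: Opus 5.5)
Your proposal is correct and follows the paper's proof essentially step for step. It uses the same within-turn telescoping of exact token advantages with zero-advantage padding to $L_{\max}$, the same rewriting of $\mathcal S_M$ and $\mathcal S_T$ through conditional prefix occupancies $d_{t,i}^{\pi'}(\cdot\mid h)$ and $d_{t,i}^{\pi}(\cdot\mid h)$ averaged over $h\sim d_t^\pi$, the same $2A_{\max}^{\mathrm{tok}}\alpha_T$ bound on $\bar A_{t,i}$, and the same final combination with Theorem~\ref{thm:macro-improvement}. The only difference is minor: you bound the prefix TV distance with a maximal coupling, giving $1-(1-\alpha_T)^{i-1}\le(i-1)\alpha_T$, whereas the paper uses the add-and-subtract recursion $D_{\mathrm{TV}}(d_{t,i+1}^{\pi'},d_{t,i+1}^{\pi})\le\alpha_T+D_{\mathrm{TV}}(d_{t,i}^{\pi'},d_{t,i}^{\pi})$, and both yield the same constant.
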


At a fixed macro state, the exact macro advantage telescopes into
exact token advantages. The macro surrogate averages them over
new-policy prefixes, whereas $\mathcal S_T$ uses old-policy prefixes.
This within-turn distribution shift produces the additional term;
Appendix~\ref{app:within-turn-proof} gives the proof.

The policy-change budgets must also be compared at the same scale.
By the KL chain rule, a uniform token KL cap $\delta$, including the
stopping decision, implies
\begin{equation}
 D_{\mathrm{KL}}^{\max}(\pi\Vert\pi')\le L_{\max}\delta,
 \qquad \alpha_M^2\le\frac{L_{\max}\delta}{2}.
 \label{eq:token-to-macro-kl}
\end{equation}
For $R\in[0,1]$, setting $\delta=\Delta/L_{\max}$ certifies a macro
KL budget $\Delta$; the token bound then carries an additional penalty
$(T+1)(L_{\max}-1)\Delta$.
\begin{tcolorbox}[
 colback=blue!4!white, colframe=blue!45!black,
 boxrule=0.6pt, arc=2mm,
 left=6pt, right=6pt, top=5pt, bottom=5pt,
 before skip=7pt, after skip=7pt]
\textbf{Takeaway.}
Under the same KL constraints, the token-surrogate improvement
bound incurs an additional within-turn distribution-shift penalty
relative to the turn-surrogate bound.
\end{tcolorbox}

% Advantage-estimation semantics follow search-bcp-rl/verl/trainer/ppo/core_algos.py.
% t indexes assistant turns; i indexes generated tokens within a turn.
\section{Multiscale Policy Optimization}
\label{sec:mpo}
\label{sec:mae}
\label{sec:estimation-scales}

As illustrated in Figure~\ref{fig:mpo-overview}, MPO combines two components
motivated by the analysis above. Its
advantage estimator, MAE, propagates credit across turns and allocates
it among tokens within each turn
(Sections~\ref{sec:turn-advantage}--\ref{sec:intra-advantage}). Its policy
update uses complete-turn importance weights while preserving each
token's advantage in the gradient (Section~\ref{sec:mae-training}).

The two MAE stages reuse the critic $V_{t,i}=V(h_{t,i})$, whose old-policy
rollout predictions remain fixed during advantage estimation.
Environment tokens enter the history but are not policy actions.
At turn boundaries, set
\begin{equation}
 V_{t,L_t+1}:=V_{t+1,1}\quad(0\leq t\leq T),\qquad
 V_{T+1,1}:=0,
 \label{eq:value-boundary}
\end{equation}
with zero values after true termination. Under the terminal reward
placement in Eq.~\eqref{eq:terminal-reward} and $\gamma=1$, define
\begin{equation}
 \delta_{t,i}=r_{t,i}+V_{t,i+1}-V_{t,i}.
 \label{eq:td}
\end{equation}

\subsection{Turn-Level Credit Propagation}
\label{sec:turn-advantage}
\label{sec:macro-gae-theory}

Summing TD residuals within a turn cancels all internal value predictions:
\begin{equation}
 \delta_t^{\mathrm{turn}}
 =\sum_{i=1}^{L_t}\delta_{t,i}
 =\mathbf 1\{t=T\}R(\tau)+V_{t+1,1}-V_{t,1}.
 \label{eq:turn-td}
\end{equation}
Applying GAE \citep{schulman2015gae} on this macro time axis gives
\begin{equation}
 \begin{aligned}
 A_t^{\mathrm{turn}}
 &=\delta_t^{\mathrm{turn}}+\lambda A_{t+1}^{\mathrm{turn}}\\
 &=\sum_{\ell=0}^{T-t}\lambda^\ell\delta_{t+\ell}^{\mathrm{turn}},
 \end{aligned}
 \label{eq:turn-gae}
\end{equation}
where $\lambda\in[0,1]$ and $A_{T+1}^{\mathrm{turn}}=0$.
This shared credit decays once per interaction rather than once per
token. For fixed rewards and boundary values, it is independent of
turn lengths and internal critic predictions.
Appendix~\ref{app:turn-credit-error} characterizes the corresponding
estimation error using only boundary critic errors; the token-GAE
comparison is in Appendix~\ref{sec:gae}.

\subsection{Within-Turn Credit Allocation}
\label{sec:intra-advantage}

To refine the relative token weights, form a local trace restricted
to the current turn:
\begin{equation}
 A_{t,i}^{\mathrm{intra}}
 =\sum_{j=i}^{L_t}\lambda_{\mathrm{intra}}^{j-i}\delta_{t,j}.
 \label{eq:intra-gae}
\end{equation}
Only the advantage trace is cut at the boundary: the final TD residual
still bootstraps from $V_{t+1,1}$, and equals $R(\tau)-V_{T,L_T}$
at trajectory termination.

We combine the turn and local signals according to two design
requirements. First, the turn estimate should determine the mean token
credit. It propagates task-level credit across interactions using boundary
values, so we use it as the shared credit for the complete output.
Broadcasting this estimate gives the turn-level score contribution by
Eq.~\eqref{eq:macro-score}; local refinement then adjusts individual token
weights around that common level. Directly adding the local trace would
also shift the mean by $\beta\overline A_t^{\mathrm{intra}}$, where
$\overline A_t^{\mathrm{intra}}=L_t^{-1}\sum_iA_{t,i}^{\mathrm{intra}}$.
Fixing the mean to $A_t^{\mathrm{turn}}$ keeps the local branch from
changing this shared credit.

Second, the allocation should retain the relative differences in the
local estimates, which provide information for distinguishing tokens
within a turn. Broadcasting alone assigns identical weights to all
tokens. We therefore seek weights that satisfy the prescribed turn mean
while remaining as close as possible to the scaled local signals.
Writing $\mathbf A_t^{\mathrm{intra}}$ for their vector, we express these
requirements as a least-squares projection. The squared distance treats
all token positions equally and penalizes changes to the local weights:
\begin{equation}
 \begin{aligned}
 \mathbf A_t^{\mathrm{MAE}}
 &=\underset{\mathbf w\in\mathbb R^{L_t}}{\operatorname{argmin}}
       \frac12\|\mathbf w-\beta\mathbf A_t^{\mathrm{intra}}\|_2^2,\\
 &\hspace{1em}\text{subject to}\quad
       \frac{1}{L_t}\sum_{i=1}^{L_t}w_i=A_t^{\mathrm{turn}},
 \end{aligned}
 \label{eq:mae-projection}
\end{equation}
where $\beta\geq0$ scales the local refinement. Solving this strictly
convex problem yields the unique allocation
\begin{equation}
 \boxed{A_{t,i}^{\mathrm{MAE}}=A_t^{\mathrm{turn}}
 +\beta\bigl(A_{t,i}^{\mathrm{intra}}-\overline A_t^{\mathrm{intra}}\bigr)}.
 \label{eq:mae}
\end{equation}
Before batch normalization, it satisfies
\begin{equation}
 \frac{1}{L_t}\sum_{i=1}^{L_t}A_{t,i}^{\mathrm{MAE}}=A_t^{\mathrm{turn}}.
 \label{eq:mae-mean}
\end{equation}
Centering fixes the average credit to the turn estimate while preserving
pairwise differences in the scaled local signals. Setting $\beta=0$
recovers turn broadcasting.

\subsection{Turn-Level Importance Correction}
\label{sec:mae-training}

Having assigned token credit, we combine it with importance correction
at the turn scale. We normalize $A_{t,i}^{\mathrm{MAE}}$ once over valid
assistant tokens in the rollout batch and hold the resulting weights
$\widehat A_{t,i}^{\mathrm{MAE}}$ fixed during the update. Write
$\pi_\theta$ for the candidate policy and $\pi$ for the fixed rollout
policy. The token and complete-turn ratios are
\begin{equation}
 \rho_{t,i}(\theta)
 =\frac{\pi_\theta(a_{t,i}\mid h_{t,i})}{\pi(a_{t,i}\mid h_{t,i})},
 \qquad
 \rho_t(\theta)=\prod_{i=1}^{L_t}\rho_{t,i}(\theta).
 \label{eq:mae-update-ratios}
\end{equation}
Multiplying all MAE advantages by the same differentiable $\rho_t$
would cancel the centered local correction in the unclipped sum.
To retain token-specific credit under complete-turn importance
weighting, we stop the gradient through the other token ratios:
\begin{equation}
 \widetilde\rho_{t,i}(\theta)
 =\rho_{t,i}(\theta)\,\operatorname{sg}\!\left[
       \prod_{\substack{j=1\\j\ne i}}^{L_t}\rho_{t,j}(\theta)
   \right],
 \label{eq:mae-turn-corrected-ratio}
\end{equation}
where $\operatorname{sg}$ preserves its argument in the forward pass
and has zero gradient. Thus every token has the complete-turn weight
$\widetilde\rho_{t,i}(\theta)=\rho_t(\theta)$ in the forward pass, while
\begin{equation}
 \nabla_\theta\widetilde\rho_{t,i}(\theta)
 =\rho_t(\theta)\nabla_\theta
       \log\pi_\theta(a_{t,i}\mid h_{t,i}).
 \label{eq:mae-turn-corrected-gradient}
\end{equation}
This construction weights each token's score gradient by the complete
turn ratio and its own MAE advantage. The MPO update is defined by the
clipped surrogate
\begin{equation}
 \begin{aligned}
 \mathcal L_{\mathrm{MPO}}(\theta)
 &=\widehat{\mathbb E}_{t,i}\Bigl[\min\bigl(
       \widetilde\rho_{t,i}(\theta)\widehat A_{t,i}^{\mathrm{MAE}},\\
       &\qquad\operatorname{clip}\bigl(\widetilde\rho_{t,i}(\theta),
          1-\epsilon,1+\epsilon\bigr)
          \widehat A_{t,i}^{\mathrm{MAE}}\bigr)\Bigr],
 \end{aligned}
 \label{eq:mae-turn-is-objective}
\end{equation}
where $\widehat{\mathbb E}_{t,i}$ is the empirical mean over all valid
assistant tokens in the rollout batch and $\epsilon>0$ is the clipping
threshold. Gradient ascent uses the stop-gradient rule in
Eq.~\eqref{eq:mae-turn-corrected-ratio}; the full turn ratio therefore
controls the forward weighting and clipping, while each token's MAE
advantage controls its own score gradient.

Following VAPO's decoupled GAE~\citep{yue2025vapo}, we set
$\lambda_{\mathrm{critic}}=1$, so the critic directly regresses the
terminal return $R(\tau)$.

\begin{tcolorbox}[
 colback=blue!4!white, colframe=blue!45!black,
 boxrule=0.6pt, arc=2mm,
 left=6pt, right=6pt, top=5pt, bottom=5pt,
 before skip=7pt, after skip=7pt]
\textbf{Takeaway.}
MPO propagates credit across turns and refines it within each turn
while preserving the turn mean. Turn-level importance weighting
then retains token-specific credit in the policy gradient.
\end{tcolorbox}

\section{Experiments}
\label{sec:experiments}
\subsection{Experimental Settings}
\label{sec:experimental-settings}
We evaluate Qwen3-4B-Thinking and Qwen3-30B-A3B-Instruct
\citep{qwen2025technical} in three agentic settings: Search, Claw-Eval,
and Automation Bench. The main
comparisons use separate training in each setting, with 300 steps for
Search, 300 for Claw-Eval, and 200 for Automation Bench. We compare
MPO with the initial checkpoint (Base), GRPO~\citep{shao2024deepseekmath},
GSPO~\citep{zheng2025gspo}, PPO~\citep{schulman2017ppo},
VAPO~\citep{yue2025vapo}, and SAO~\citep{hou2026sao}.
To examine the update objective, MPO w/ token IS retains MAE but
replaces complete-turn importance ratios with token-level ratios.

\paragraph{Evaluation protocol.}
For the trained-model results in all tables, we select the
best-performing checkpoint from each training run. We evaluate the
selected checkpoint offline five times and report the mean of the
five evaluation scores. The benchmark-specific rollout budgets and
metrics described below apply to each evaluation repetition.
For BrowseComp-Plus, we follow the held-out evaluation setup of
\citet{lu2025supo}. For GAIA and HLE, we use WebThinker's text-only
evaluation subsets, drawn from their validation and test sets,
respectively \citep{li2025webthinker}. Base results use three rollouts
per task. For trained Search models, pass@1 and pass@3 are estimated
from a pool of 32 independent rollouts per task, using the same samples
across evaluation budgets. We follow the original evaluation protocols
for Claw-Eval \citep{ye2026claweval} and Automation Bench
\citep{shepard2026automationbench}. Claw-Eval uses three independent
trials, reporting pass@3 for success in at least one trial and
pass$^{3}$ for success in all three. Automation Bench reports pass@1
using its end-state assertion checks. All score differences below are
percentage points.

\subsection{Main Results}
\label{sec:main-results}
\begin{table}[t]
 \centering
 \caption{Main evaluation on search and agent tasks. Search benchmarks
 report pass@1 and pass@3; Claw-Eval reports pass@3 and pass$^{3}$.
 Automation Bench reports pass@1.
 Search and Claw-Eval scores are percentages; higher is better for all metrics.
 Base denotes the initial checkpoint before task-specific RL. Trained-model
 Search pass@1 and pass@3 are estimated using 32 rollouts per task.}
 \label{tab:main-results}
 \footnotesize
 \setlength{\tabcolsep}{2.5pt}
 \renewcommand{\arraystretch}{1.12}
 \begin{tabularx}{\textwidth}{@{}l*{8}{>{\hsize=.95\hsize\centering\arraybackslash}X}>{\hsize=1.4\hsize\centering\arraybackslash}X@{}}
  \toprule
  & \multicolumn{2}{c}{BrowseComp-Plus}
  & \multicolumn{2}{c}{GAIA}
  & \multicolumn{2}{c}{HLE}
  & \multicolumn{2}{c}{Claw-Eval}
  & \shortstack{Automation\\Bench} \\
  \cmidrule(lr){2-3}\cmidrule(lr){4-5}\cmidrule(lr){6-7}
  \cmidrule(lr){8-9}\cmidrule(lr){10-10}
  Method & pass@1 & pass@3 & pass@1 & pass@3 & pass@1 & pass@3
  & pass@3 & pass$^{3}$ & pass@1 \\
  \midrule
  \multicolumn{10}{l}{\textbf{Qwen3-4B-Thinking}} \\
  Base & 8.84  & 18.07 & 24.92 & 33.98 & 5.53 & 9.60 & 25.22 & 8.09 & 0.00 \\
  \rowcolor{policyrow}
  GRPO & 39.34 & 52.12 & 33.92 & 44.63 & 8.51 & 14.43 & 64.45 & 48.79 & 4.33 \\
  \rowcolor{policyrow}
  GSPO & 38.56 & 52.11 & 34.34 & 45.26 & 9.26 & 13.71 & 63.83 & 49.35 & 5.22 \\
  \rowcolor{criticrow}
  PPO  & 40.85 & 54.30 & 31.71 & 45.03 & 7.22 & 12.82 & 63.25 & 50.78 & 11.67 \\
  \rowcolor{criticrow}
  VAPO & 43.23 & 57.33 & 33.91 & 46.15 & 7.54 & 13.27 & 64.35 & 49.22 & 11.50 \\
  \rowcolor{criticrow}
  SAO  & 43.75 & 56.88 & 35.62 & 47.56 & 8.19 & 14.23 & 64.52 & 49.11 & 12.06\\
  \rowcolor{maerow}
  MPO w/ token IS & 45.72 & 57.47 & 36.35 & 50.27 & 8.89 & 15.17 & 65.13 & 50.22 & 12.45 \\
  \rowcolor{maerow}
  \textbf{MPO (Ours)} & 48.08 & 61.83 & 39.47 & 52.67 & 9.92 & 16.78 & 66.91 & 51.73 & 13.33 \\
  \midrule
  \multicolumn{10}{l}{\textbf{Qwen3-30B-A3B-Instruct}} \\
  Base & 21.69 & 27.71 & 25.24 & 37.86 & 7.47 & 12.60 & 32.19 & 12.64 & 0.17 \\
  \rowcolor{policyrow}
  GRPO & 41.83 & 48.18 & 31.03 & 40.51 & 8.33 & 12.68 & 68.23 & 52.55 & 5.00 \\
  \rowcolor{policyrow}
  GSPO & 40.22 & 47.31 & 35.25 & 46.25 & 8.61 & 13.34 & 68.62 & 51.34 & 6.23\\
  \rowcolor{criticrow}
  PPO  & 45.20 & 51.63 & 35.70 & 45.28 & 8.31 & 13.33 & 70.14 & 55.33 & 14.17 \\
  \rowcolor{criticrow}
  VAPO & 46.73 & 57.33 & 38.33 & 49.22 & 9.25 & 14.64 & 71.31 & 56.72 & 13.77\\
  \rowcolor{criticrow}
  SAO  & 45.36 & 57.88 & 39.62 & 50.34 & 9.91 & 14.37 & 72.72 & 57.23 & 14.20 \\
  \rowcolor{maerow}
  MPO w/ token IS & 48.27 & 58.12 & 41.23 & 52.12 & 9.45 & 14.67 & 72.19 & 57.61 & 13.17 \\
  \rowcolor{maerow}
  \textbf{MPO (Ours)} & 50.32 & 59.33 & 43.42 & 56.24 & 10.03 & 15.16 & 73.22 & 58.35 & 15.00 \\
  \bottomrule
 \end{tabularx}
\end{table}

% Base search results: search_eval/runs/local_initial_pass3_20261008/results.csv
% Metric precedents:
% WebDancer, https://arxiv.org/abs/2505.22648 (GAIA pass@3).
% Chain-of-Agents, https://arxiv.org/abs/2508.13167 (GAIA/HLE pass@3).
% Claw-Eval, https://arxiv.org/abs/2604.06132 (pass@3 and pass^3).
% TODO: Specify held-out splits, text-only subsets where applicable, and
% matched tool, decoding, and trajectory budgets before reporting results.

\paragraph{Performance across tasks and model scales.}
Table~\ref{tab:main-results} shows that MPO achieves the highest reported
score on all nine metrics at both model sizes. Relative to the strongest
non-MPO baseline in each column, MPO improves BrowseComp-Plus, GAIA, and HLE
pass@1 by 4.33, 3.85, and 0.66 points on 4B, and by 3.59, 3.80, and
0.12 points on 30A3B. The gains extend to pass@3, including improvements
of 5.11 and 5.90 points on GAIA at the two scales. MPO also improves
both Claw-Eval metrics and Automation Bench performance. For example,
Claw-Eval pass$^{3}$ increases by 0.95 and 1.12 points, showing that
the improvements extend to success across repeated attempts.

\paragraph{Aligning advantage estimation and importance correction.}
Replacing turn-level ratios with token-level ratios reduces every
reported score. The Search pass@1 drops on BrowseComp-Plus, GAIA,
and HLE are 2.36, 3.12, and 1.03 points for 4B, and 2.05, 2.19,
and 0.58 points for 30A3B. With token IS, MAE still outperforms PPO
on all six Search metrics at both scales, indicating that multiscale
credit assignment contributes useful gains on its own. Complete-turn
importance correction provides a further improvement. Together, these
comparisons support aligning shared credit and probability correction
with the turn-level interaction objective, while retaining local
token distinctions in the policy gradient.
Figure~\ref{fig:token-is-training} provides a visualization
of validation performance and training reward through step 300.

\par\medskip
\noindent\begin{minipage}{\columnwidth}
 \centering
 \captionsetup{hypcap=false}
 \includegraphics[width=0.49\linewidth]{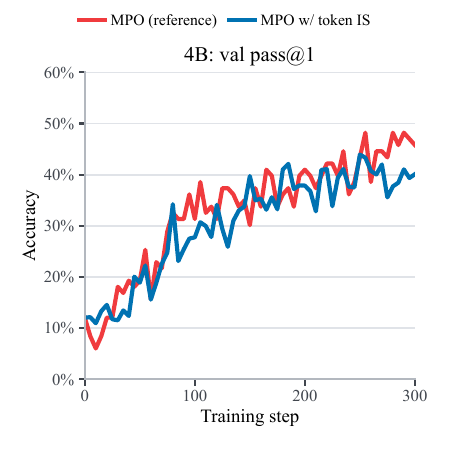}\hfill
 \includegraphics[width=0.49\linewidth]{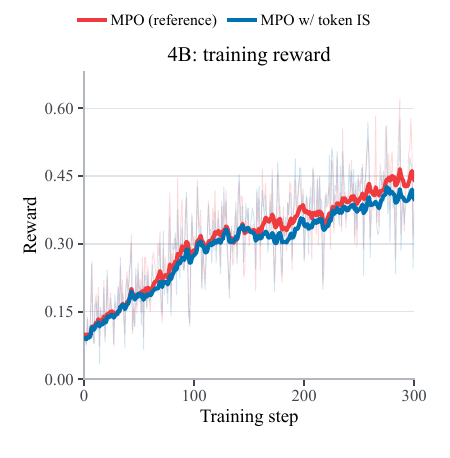}
 \captionof{figure}{4B validation pass@1 (left) and training reward
 (right) through step 300.}
 \label{fig:token-is-training}
\end{minipage}
\par\medskip

\begin{figure}[t]
 \centering
 \includegraphics[width=0.98\textwidth]{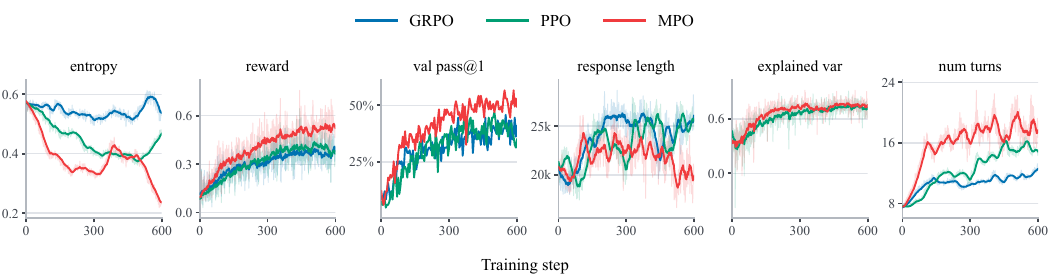}
 \caption{Training dynamics of GRPO, PPO, and MPO on 4B through step 600,
 showing entropy, reward, validation pass@1, response length,
 explained variance, and number of turns.}
 \label{fig:training-dynamics}
\end{figure}

\subsection{Training Dynamics}
\label{sec:training-dynamics}

Figures~\ref{fig:training-dynamics} and~\ref{fig:training-dynamics-30a3b}
compare GRPO, PPO, and the full MPO method, combining MAE with
turn-level importance correction. MPO is shown in red in these
figures and Figure~\ref{fig:test-time-scaling}.
We track entropy, reward, validation pass@1, response length, critic
explained variance, and interaction turns through step 600 for 4B
and step 300 for 30A3B. Each method has one run per model size.

\begin{figure}[b]
 \centering
 \includegraphics[width=0.98\textwidth]{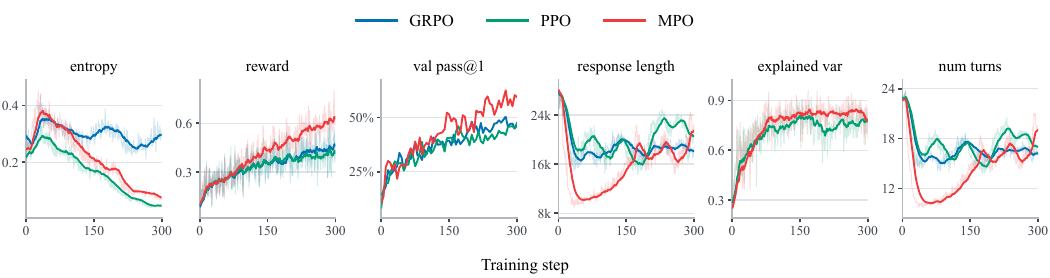}
 \caption{Training dynamics of GRPO, PPO, and MPO on 30A3B through step 300,
 using the same metrics as Figure~\ref{fig:training-dynamics}.}
 \label{fig:training-dynamics-30a3b}
\end{figure}

\paragraph{Critic fit under the same regression target.}
PPO and MPO both train the critic against terminal returns with
$\lambda_{\mathrm{critic}}=1$, yet MPO exhibits higher explained variance
on average during late training at both model scales. A plausible
explanation is that turn-aligned policy updates change the visited histories and
subsequent outcomes, making their return structure easier for the
critic to capture. Better value estimates may in turn improve the
actor's credit signal. This interpretation concerns the interaction
between policy learning and critic fitting: explained variance is
measured on each policy's own rollouts and also depends on return
variance, so it does not isolate a causal improvement in critic accuracy.

\subsection{Test-Time Scaling}
\label{sec:test-time-scaling}

\begin{figure}[t]
 \centering
 \includegraphics[width=0.98\textwidth]{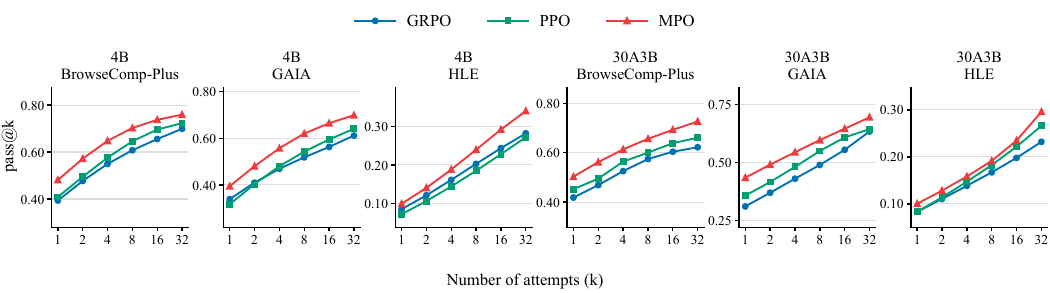}
 \caption{Test-time scaling of GRPO, PPO, and MPO on BrowseComp-Plus, GAIA,
 and HLE for 4B and 30A3B, showing pass@k as the number of attempts increases.}
 \label{fig:test-time-scaling}
\end{figure}

Figure~\ref{fig:test-time-scaling} compares GRPO, PPO, and MPO on
BrowseComp-Plus, GAIA, and HLE at both 4B and 30A3B scales. We report
pass@k for $k\in\{1,2,4,8,16,32\}$ using 32 independent rollouts per
task, reusing the same pool for every $k$. All MPO curves use the
full method with turn-level importance correction. Each method uses
a fixed checkpoint across sampling budgets. Decoding settings,
available tools, and per-rollout budgets are matched across methods.

\paragraph{Benefits beyond a single attempt.}
The gains from MPO extend to larger sampling budgets.
On 4B, MPO leads GRPO and PPO at every evaluated $k$ on all three
benchmarks. Its pass@32 reaches 75.90 on BrowseComp-Plus, 69.90 on
GAIA, and 34.00 on HLE, exceeding the stronger of GRPO and PPO
by 3.61, 5.82, and 5.80 points, respectively. Thus the advantage
persists as additional samples expand task coverage. The
30A3B runs also benefit from additional attempts, although the
relative gains depend on the benchmark and sampling budget.

These results also put the entropy curves in context. PPO's entropy
drops sharply during 30A3B training while its validation performance
continues to improve. Likewise, the lower entropy of 4B MPO coexists
with strong performance across sampling budgets. Token entropy alone
therefore does not establish a collapse in useful exploration:
pass@k directly tests whether additional sampled trajectories solve
tasks that a single attempt misses.

\subsection{Ablation Studies}
\label{sec:ablations}

We ablate cross-turn propagation, within-turn trace length, and local
refinement strength on 4B Search by varying $\lambda$,
$\lambda_{\mathrm{intra}}$, and $\beta$ one at a time. The critic
target, initialization, and remaining training setup are held fixed.
Tables~\ref{tab:ablation-credit} and~\ref{tab:ablation-beta} report
pass@1 and pass@3 on all three Search benchmarks. The reference
setting is $(\lambda,\lambda_{\mathrm{intra}},\beta)=(0.80,0.95,1.00)$
and reproduces MPO's 4B results in Table~\ref{tab:main-results}.
Each setting has one run.
Validation learning curves for these settings appear in
Appendix~\ref{app:ablation-validation-curves}.

% Single-column ablation tables near the ablation discussion.
\begin{table}[!t]
 \captionsetup{skip=4pt}
 \renewcommand{\arraystretch}{1.05}
 \centering
 \caption{Search ablation over turn-scale propagation and within-turn trace.
 Scores are percentages. The bold setting reproduces 4B MPO from
 Table~\ref{tab:main-results}. We vary
 $(\lambda,\lambda_{\mathrm{intra}})$ with $\beta=1.00$ fixed.}
 \label{tab:ablation-credit}
 \small
 \setlength{\tabcolsep}{1.2pt}
 \begin{tabular*}{\columnwidth}{@{\extracolsep{\fill}}l*{6}{c}@{}}
  \toprule
  Setting & \multicolumn{2}{c}{BrowseComp-Plus}
  & \multicolumn{2}{c}{GAIA} & \multicolumn{2}{c}{HLE} \\
  \cmidrule(lr){2-3}\cmidrule(lr){4-5}\cmidrule(lr){6-7}
  $(\lambda,\lambda_{\mathrm{intra}})$ & pass@1 & pass@3
  & pass@1 & pass@3 & pass@1 & pass@3 \\
  \midrule
  \textbf{(0.80, 0.95)} & 48.08 & 61.83 & 39.47 & 52.67 & 9.92 & 16.78 \\
  (0.60, 0.95) & 47.51 & 59.22 & 38.92 & 50.79 & 9.11 & 15.17 \\
  (0.85, 0.95) & 46.36 & 58.91 & 37.26 & 51.04 & 8.87 & 14.89 \\
  (0.80, 0.90) & 46.25 & 57.81 & 37.92 & 50.35 & 8.15 & 15.22 \\
  (0.80, 1.00) & 47.67 & 59.18 & 38.23 & 50.95 & 8.43 & 16.16 \\
  \bottomrule
 \end{tabular*}
\end{table}

\begin{table}[!t]
 \captionsetup{skip=4pt}
 \renewcommand{\arraystretch}{1.05}
 \centering
 \caption{Search ablation over local refinement strength. Scores are
 percentages. The bold setting reproduces 4B MPO from
 Table~\ref{tab:main-results}. We vary $\beta$ with
 $\lambda=0.80$ and $\lambda_{\mathrm{intra}}=0.95$ fixed.}
 \label{tab:ablation-beta}
 \small
 \setlength{\tabcolsep}{1.2pt}
 \begin{tabular*}{\columnwidth}{@{\extracolsep{\fill}}l*{6}{c}@{}}
  \toprule
  Setting & \multicolumn{2}{c}{BrowseComp-Plus}
  & \multicolumn{2}{c}{GAIA} & \multicolumn{2}{c}{HLE} \\
  \cmidrule(lr){2-3}\cmidrule(lr){4-5}\cmidrule(lr){6-7}
  $\beta$ & pass@1 & pass@3 & pass@1 & pass@3 & pass@1 & pass@3 \\
  \midrule
  0.0 & 41.80 & 56.77 & 34.20 & 48.90 & 7.88 & 12.96 \\
  0.5 & 44.13 & 58.10 & 37.24 & 51.85 & 8.26 & 13.22 \\
  \textbf{1.0} & 48.08 & 61.83 & 39.47 & 52.67 & 9.92 & 16.78 \\
  1.5 & 46.23 & 60.35 & 38.26 & 51.26 & 8.93 & 16.23 \\
  2.0 & 45.14 & 59.75 & 37.05 & 50.01 & 8.11 & 15.43 \\
  \bottomrule
 \end{tabular*}
\end{table}

\paragraph{Turn-scale propagation, $\lambda$.}
The intermediate setting $\lambda=0.80$ performs best among the
tested values. Increasing it to 0.85 reduces BrowseComp-Plus, GAIA,
and HLE pass@1 by 1.72, 2.21, and 1.05 points, respectively, and
also reduces all three pass@3 scores. Moving closer to one thus
hurts performance over this part of the tested range, consistent
with a longer trace carrying more variability from distant
interactions. Reducing $\lambda$ to 0.60 also lowers performance,
suggesting that shortening the trace too much discards useful
cross-turn credit. The results favor balancing these effects rather
than maximizing the propagation range.

\paragraph{Within-turn trace, $\lambda_{\mathrm{intra}}$.}
Reducing $\lambda_{\mathrm{intra}}$ from 0.95 to 0.90 lowers pass@1
by 1.83, 1.55, and 1.77 points on BrowseComp-Plus, GAIA, and HLE,
and lowers pass@3 on every benchmark. This shorter trace limits how
far local TD information reaches within a generated turn. Setting
$\lambda_{\mathrm{intra}}=1$ also reduces all six scores relative to
0.95, including a 0.41-point drop in BrowseComp-Plus pass@1.
Thus, 0.95 performs best on all six metrics in this sweep.

\paragraph{Different decay rates for different time scales.}
The two coefficients count different events: $\lambda$ decays once
per interaction, whereas $\lambda_{\mathrm{intra}}$ decays once per
generated token. A trajectory contains far fewer interactions than
tokens, and each interaction itself requires many generated tokens.
Using the same coefficient would therefore impose very different
credit ranges. For example, the nominal geometric trace length
$1/(1-\lambda)$ is five turns at $\lambda=0.80$, while
$\lambda_{\mathrm{intra}}=0.95$ gives twenty tokens; reducing the
latter to 0.90 halves that local range. A smaller turn coefficient
can therefore retain credit across several interactions, while a
larger local coefficient avoids restricting allocation to only a
few tokens. This difference motivates tuning the two scales
separately and explains our choice of
$\lambda<\lambda_{\mathrm{intra}}$: multiscale policy optimization
uses distinct propagation ranges for environment interaction and
autoregressive generation.

\paragraph{Local refinement strength, $\beta$.}
Setting $\beta=0$ removes within-turn credit allocation from MAE,
leaving only turn-level GAE: every token in a turn receives the same
turn advantage. Even without within-turn allocation, this turn-level
variant outperforms the standard PPO baseline with token-level GAE
in Table~\ref{tab:main-results} on all six Search metrics. Its pass@1
gains over PPO are 0.95, 2.49, and 0.66 percentage points on
BrowseComp-Plus, GAIA, and HLE, respectively. However, removing local
allocation still substantially reduces performance relative to full
MPO (Table~\ref{tab:ablation-beta}). Compared with $\beta=1.00$, pass@1
drops by 6.28, 5.27, and 2.04 percentage points on BrowseComp-Plus,
GAIA, and HLE, respectively, and pass@3 also decreases on all three
benchmarks. These results show that allocating credit within a turn
provides benefits beyond propagating credit across turns alone.
Introducing local refinement with $\beta=0.50$ partially recovers
the loss, while $\beta=1.00$ performs best on all six metrics.
Increasing $\beta$ further to 1.50 and then 2.00 progressively reduces
both pass@1 and pass@3 across all three benchmarks. Since the local
correction is centered, these changes preserve the turn's mean
advantage while amplifying differences among its tokens. The results
therefore support within-turn credit allocation with a moderate
refinement strength; removing it or giving it excessive weight both
hurt performance.

\section{Conclusion}
\label{sec:conclusion}

We introduced MPO to organize LLM-agent policy optimization across
turns and tokens. Its MAE component combines cross-turn credit
propagation with mean-preserving local allocation, and its update
aligns complete-turn importance correction with token-specific
gradients. Our analysis motivates these scale choices, while
experiments show gains across Search and agent tasks and continued
benefits from increased sampling. The ablations support treating
interaction and generation as distinct credit horizons. Together,
these findings motivate using both scales when optimizing
long-horizon LLM agents.

% Keep experimental floats before the bibliography without forcing a new page.
\FloatBarrier
\bibliography{ref}

\clearpage
\appendix

\section{Proofs of the Policy-Improvement Guarantees}
\label{app:improvement-proofs}

\subsection{Objective Consistency and the Macro PDL}
\label{app:macro-pdl-proof}

\begin{proof}[Proof of Lemma~\ref{lem:objective-consistency}]
Fix a complete interaction $\tau$. Its probability is the initial-task
probability, multiplied by the probabilities of the generated assistant
tokens and the intervening environment responses. Grouping the assistant
factors within each turn gives
\[
 p_\theta(\tau)
 =p(o_0)\prod_{t=0}^{T}\prod_{i=1}^{L_t}
       \pi_\theta(a_{t,i}\mid h_{t,i})
       \prod_{t=0}^{T-1}P_{\mathrm{env}}(o_{t+1}\mid h_t,a_t)
 =p(o_0)\prod_{t=0}^{T}\pi_\theta(a_t\mid h_t)
       \prod_{t=0}^{T-1}P_{\mathrm{env}}(o_{t+1}\mid h_t,a_t).
\]
Thus the token and macro descriptions induce the same trajectory law.
They also assign the same return to each trajectory: $\gamma=1$, the
reward is unchanged, and the same stopping rule determines termination.
Their expected task returns are consequently equal.

For a fixed complete output of positive probability, the finite-product
factorization gives
\[
 \log\pi_\theta(a_t\mid h_t)
 =\sum_{i=1}^{L_t}\log\pi_\theta(a_{t,i}\mid h_{t,i}).
\]
Differentiating this finite sum proves Eq.~\eqref{eq:macro-score}, wherever
the token probabilities are differentiable. For variable-length outputs,
the stopping decision is one of these factors; when differentiating a
fixed output its length is fixed. Deterministic forced stopping
contributes a factor of one and a zero score.
\end{proof}

\begin{proof}[Proof of Lemma~\ref{lem:macro-pdl}]

For each supplied history, the values in
Eq.~\eqref{eq:macro-exact-values} use the old policy for all subsequent
decisions. They are defined throughout the histories relevant to either
policy, not merely at the histories observed in a sampled batch.
At termination let $h_{T+1}$ be an absorbing history with value zero.
The reward law and the law of $h_{t+1}$ given $(h_t,a_t)$ are common to
both policies. Consequently the exact values satisfy
\begin{align}
 Q_t^\pi(h,a)
 &=\mathbb E\left[\sum_{i=1}^{L_t}r_{t,i}
          +V_{t+1}^\pi(h_{t+1})\,\middle|\,h_t=h,a_t=a\right],
 \label{eq:macro-bellman-q}\\
 V_t^\pi(h)&=\sum_a\pi(a\mid h)Q_t^\pi(h,a),
 \qquad
 \sum_a\pi(a\mid h)A_t^\pi(h,a)=0.
 \label{eq:macro-bellman-centering}
\end{align}
Here $\sum_i r_{t,i}$ is zero except at the actual terminal turn,
where it equals $R(\tau)$; no intermediate reward is introduced.
All expectations are assumed finite, as they are for bounded rewards.

We next relate these conditional identities to the full return.
Define the one-turn temporal difference
\[
 \delta_t^\pi=\sum_{i=1}^{L_t}r_{t,i}
              +V_{t+1}^\pi(h_{t+1})-V_t^\pi(h_t).
\]
Although actions are sampled under $\pi'$, conditioning on the current
history and complete output fixes the environment's next-step law.
Equation~\eqref{eq:macro-bellman-q} therefore gives
\begin{equation}
 \mathbb E_{\pi'}[\delta_t^\pi\mid h_t,a_t]
          =Q_t^\pi(h_t,a_t)-V_t^\pi(h_t)=A_t^\pi(h_t,a_t).
 \label{eq:macro-conditional-td}
\end{equation}
For each realized trajectory the value terms telescope, since $\gamma=1$:
\[
 \sum_{t=0}^{T}\delta_t^\pi
   =\sum_{t=0}^{T}\sum_{i=1}^{L_t}r_{t,i}
          +V_{T+1}^\pi(h_{T+1})-V_0^\pi(h_0)
   =R(\tau)-V_0^\pi(h_0).
\]
Under a trajectory sampled from $\pi'$, the marginal law of $(h_t,a_t)$
is $d_t^{\pi'}(h_t)\pi'(a_t\mid h_t)$. Thus the conditional identity in
Eq.~\eqref{eq:macro-conditional-td} gives the following chain:
\begin{align*}
 \sum_{t=0}^{T}\mathbb E_{d_t^{\pi'},\pi'}[A_t^\pi]
 &=\sum_{t=0}^{T}\mathbb E_{\tau\sim\pi'}
   \left[\mathbb E_{\pi'}[\delta_t^\pi\mid h_t,a_t]\right]\\
 &=\sum_{t=0}^{T}\mathbb E_{\tau\sim\pi'}[\delta_t^\pi]\\
 &=\mathbb E_{\tau\sim\pi'}\left[\sum_{t=0}^{T}\delta_t^\pi\right]\\
 &=\mathbb E_{\tau\sim\pi'}[R(\tau)-V_0^\pi(h_0)]\\
 &=\mathcal J(\pi')-\mathbb E_{h_0\sim d_0^{\pi'}}[V_0^\pi(h_0)].
\end{align*}
The second equality is the tower property; the third exchanges a finite
sum and expectation; the fourth uses the pathwise telescoping identity.

Both policies start from the same initial-history distribution, so
$d_0^{\pi'}=d_0^\pi$. By the definition of the exact old-policy value,
$V_0^\pi(h_0)=\mathbb E_{\tau\sim\pi}[R(\tau)\mid h_0]$. Hence
\begin{align*}
 \mathbb E_{h_0\sim d_0^{\pi'}}[V_0^\pi(h_0)]
 &=\mathbb E_{h_0\sim d_0^\pi}
   \left[\mathbb E_{\tau\sim\pi}[R(\tau)\mid h_0]\right]\\
 &=\mathbb E_{\tau\sim\pi}[R(\tau)]
 =\mathcal J(\pi).
\end{align*}
Only the initial-history distributions are identified here; the full
trajectory laws under $\pi'$ and $\pi$ are not interchanged. Substituting
this equality into the preceding chain yields
\[
 \sum_{t=0}^{T}\mathbb E_{d_t^{\pi'},\pi'}[A_t^\pi]
 =\mathcal J(\pi')-\mathcal J(\pi).
\]
This proves the PDL in Eq.~\eqref{eq:macro-pdl}; it does not replace
any new-policy state distribution by an old-policy one.

It remains to establish the importance-ratio representation.
Replacing only $d_t^{\pi'}$ by $d_t^\pi$ in the PDL defines
$\mathcal S_M$; the resulting approximation error is retained below.
For its ratio representation, common support gives, at a fixed $h$,
\begin{align*}
 \sum_a\pi(a\mid h)
       \left(\frac{\pi'(a\mid h)}{\pi(a\mid h)}-1\right)A_t^\pi(h,a)
 &=\sum_a\pi'(a\mid h)A_t^\pi(h,a)
       -\sum_a\pi(a\mid h)A_t^\pi(h,a)\\
 &=\sum_a\pi'(a\mid h)A_t^\pi(h,a).
\end{align*}
Average over $d_t^\pi$ and sum over $t$ to obtain
the importance-ratio representation in Lemma~\ref{lem:macro-pdl}. Absolute continuity of the
new action law with respect to the old one suffices for this step;
the preceding PDL and the TV argument below do not require it.

\end{proof}

\subsection{Turn-Level Policy Improvement}
\label{app:macro-bound-proof}

\begin{proof}[Proof of Theorem~\ref{thm:macro-improvement}]
Sums below are over complete
outputs and token-sequence histories; the same argument uses integrals
for general observation spaces.

We first bound the difference between the turn-start occupancy measures.
Given a history $h$ and a complete output $a$, the next observation is
sampled from the same environment distribution
$P_{\mathrm{env}}(o\mid h,a)$ under both policies. The next history is
obtained by appending $a$ and $o$ to $h$. Thus the joint probabilities of
$(h,a,o)$ are $d_t^{\pi'}(h)\pi'(a\mid h)P_{\mathrm{env}}(o\mid h,a)$
and $d_t^\pi(h)\pi(a\mid h)P_{\mathrm{env}}(o\mid h,a)$, respectively.
Summing these probabilities over triples that produce the same next
history and applying the triangle inequality gives, for $0\le t<T$,
\begin{align}
 D_{\mathrm{TV}}(d_{t+1}^{\pi'},d_{t+1}^\pi)
 &\le\frac12\sum_{h,a,o}
    \left|d_t^{\pi'}(h)\pi'(a\mid h)-d_t^\pi(h)\pi(a\mid h)\right|
    P_{\mathrm{env}}(o\mid h,a)\notag\\
 &=\frac12\sum_{h,a}
    \left|d_t^{\pi'}(h)\pi'(a\mid h)-d_t^\pi(h)\pi(a\mid h)\right|\notag\\
 &\le\frac12\sum_{h,a}d_t^{\pi'}(h)
                 |\pi'(a\mid h)-\pi(a\mid h)|
    +\frac12\sum_{h,a}|d_t^{\pi'}(h)-d_t^\pi(h)|\pi(a\mid h)\notag\\
 &=\sum_h d_t^{\pi'}(h)
      D_{\mathrm{TV}}(\pi'(\cdot\mid h),\pi(\cdot\mid h))
      +D_{\mathrm{TV}}(d_t^{\pi'},d_t^\pi)\notag\\
 &\le\alpha_M+D_{\mathrm{TV}}(d_t^{\pi'},d_t^\pi).
 \label{eq:macro-detailed-state-recursion}
\end{align}
The first equality uses $\sum_oP_{\mathrm{env}}(o\mid h,a)=1$.
The second inequality adds and subtracts
$d_t^{\pi'}(h)\pi(a\mid h)$ inside the absolute value. The resulting
terms separate the change in the action distribution from the existing
difference in occupancy measures. Since $d_0^{\pi'}=d_0^\pi$, induction yields
\begin{equation}
 D_{\mathrm{TV}}(d_t^{\pi'},d_t^\pi)\le t\alpha_M,
 \qquad 0\le t\le T.
 \label{eq:macro-occupancy-tv}
\end{equation}
In particular, the first turn contributes no occupancy-measure error.

Next, the exact old-policy advantage has zero mean under $\pi$.
Using Eq.~\eqref{eq:macro-bellman-centering}, for every history $h$,
\begin{align}
 \left|\mathbb E_{a\sim\pi'(\cdot\mid h)}[A_t^\pi(h,a)]\right|
 &=\left|\sum_a[\pi'(a\mid h)-\pi(a\mid h)]A_t^\pi(h,a)\right|\notag\\
 &\le A_{\max}\sum_a|\pi'(a\mid h)-\pi(a\mid h)|
 \le2A_{\max}\alpha_M.
 \label{eq:macro-detailed-advantage-bound}
\end{align}

Finally, subtracting the surrogate from the performance-difference
identity in Lemma~\ref{lem:macro-pdl} gives the exact relation
\[
 \mathcal J(\pi')-\mathcal J(\pi)-\mathcal S_M
 =\sum_{t=0}^{T}\sum_h[d_t^{\pi'}(h)-d_t^\pi(h)]
       \mathbb E_{a\sim\pi'(\cdot\mid h)}[A_t^\pi(h,a)].
\]
Taking absolute values and applying
Eqs.~\eqref{eq:macro-detailed-advantage-bound}
and~\eqref{eq:macro-occupancy-tv}, in that order, gives
\begin{align}
 |\mathcal J(\pi')-\mathcal J(\pi)-\mathcal S_M|
 &\le\sum_{t=0}^{T}\sum_h|d_t^{\pi'}(h)-d_t^\pi(h)|
       \left|\mathbb E_{a\sim\pi'(\cdot\mid h)}[A_t^\pi(h,a)]\right|\notag\\
 &\le2A_{\max}\alpha_M
       \sum_{t=0}^{T}\sum_h|d_t^{\pi'}(h)-d_t^\pi(h)|\notag\\
 &=4A_{\max}\alpha_M
       \sum_{t=0}^{T}D_{\mathrm{TV}}(d_t^{\pi'},d_t^\pi)\notag\\
 &\le4A_{\max}\alpha_M^2\sum_{t=0}^{T}t
 =2A_{\max}T(T+1)\alpha_M^2.
 \label{eq:macro-detailed-remainder}
\end{align}
The equality uses $D_{\mathrm{TV}}(p,q)=\frac12\|p-q\|_1$,
and the last step uses $\sum_{t=0}^{T}t=T(T+1)/2$.
Taking the lower side of this absolute-error bound gives
Theorem~\ref{thm:macro-improvement}.

\end{proof}

For $T=0$, the initial occupancy measures coincide and the remainder is zero.
For $\alpha_M=0$, both the surrogate gain and the remainder are zero.
The condition in the theorem is sufficient for improvement; it
is not necessary.

\subsection{Within-Turn Surrogate Shift and KL Conversion}
\label{app:within-turn-proof}

\begin{proof}[Proof of Theorem~\ref{thm:scale-penalties}]
Fix a turn $t$, its starting history $h_t=h$, and a complete output $a_t$.
At an active token prefix $x$, write
$V^{\pi,\mathrm{tok}}(x)=\mathbb E_\pi[R(\tau)\mid h_{t,i}=x]$
for the expected return from continuing $\pi$. Thus
$V^{\pi,\mathrm{tok}}(h_{t,1})=V_t^\pi(h)$.
The exact token advantage is the expected return after fixing the next
token, minus this value. For $i<L_t$, appending $a_{t,i}$ deterministically
gives $h_{t,i+1}$, with no reward or environment interaction. Hence
\[
 A_{t,i}^{\pi,\mathrm{tok}}(h_{t,i},a_{t,i})
 =V^{\pi,\mathrm{tok}}(h_{t,i+1})-V^{\pi,\mathrm{tok}}(h_{t,i}).
\]
At the last token, the complete macro action has been specified, so
\begin{align*}
 A_{t,L_t}^{\pi,\mathrm{tok}}(h_{t,L_t},a_{t,L_t})
 &=\mathbb E\!\left[\sum_{i=1}^{L_t}r_{t,i}
             +V_{t+1}^\pi(h_{t+1})\,\middle|\,h_t,a_t\right]
             -V^{\pi,\mathrm{tok}}(h_{t,L_t})\\
 &=Q_t^\pi(h_t,a_t)-V^{\pi,\mathrm{tok}}(h_{t,L_t}).
\end{align*}
The conditional expectation averages over the environment response,
and the second equality is Eq.~\eqref{eq:macro-bellman-q}.
Summing over the tokens cancels each internal value:
\begin{equation}
 \begin{aligned}
 \sum_{i=1}^{L_t}A_{t,i}^{\pi,\mathrm{tok}}(h_{t,i},a_{t,i})
 &=\sum_{i=1}^{L_t-1}
     \bigl[V^{\pi,\mathrm{tok}}(h_{t,i+1})-V^{\pi,\mathrm{tok}}(h_{t,i})\bigr]
     +Q_t^\pi(h_t,a_t)-V^{\pi,\mathrm{tok}}(h_{t,L_t})\\
 &=Q_t^\pi(h_t,a_t)-V_t^\pi(h_t)=A_t^\pi(h_t,a_t).
 \end{aligned}
 \label{eq:exact-advantage-telescope}
\end{equation}
This identity holds for every complete output.

We next bound the difference between the within-turn occupancy measures.
Pad a stopped output to $L_{\max}$ positions with deterministic
zero-advantage actions. Extend the occupancy notation by writing
$d_{t,i}^{\pi}(\cdot\mid h)$ for the distribution of the token prefix
$h_{t,i}$ when the turn starts at $h$ and its tokens are sampled under
$\pi$; define $d_{t,i}^{\pi'}(\cdot\mid h)$ in the same way.
These are conditional occupancy measures within a fixed turn: both
start from the same $h$, regardless of how that history was reached.
In the sums below, $x$ denotes a token prefix and $a$ a single token.

Each next prefix is obtained by sampling one token and appending it.
As in Eq.~\eqref{eq:macro-detailed-state-recursion}, adding and
subtracting $d_{t,i}^{\pi'}(x\mid h)\pi(a\mid x)$ gives
\begin{align*}
 D_{\mathrm{TV}}\bigl(d_{t,i+1}^{\pi'}(\cdot\mid h),d_{t,i+1}^{\pi}(\cdot\mid h)\bigr)
 &\le\frac12\sum_{x,a}
   \left|d_{t,i}^{\pi'}(x\mid h)\pi'(a\mid x)
         -d_{t,i}^{\pi}(x\mid h)\pi(a\mid x)\right|\\
 &\le\frac12\sum_{x,a}d_{t,i}^{\pi'}(x\mid h)
             |\pi'(a\mid x)-\pi(a\mid x)|\\
 &\quad+\frac12\sum_{x,a}|d_{t,i}^{\pi'}(x\mid h)-d_{t,i}^{\pi}(x\mid h)|
             \pi(a\mid x)\\
 &=\sum_x d_{t,i}^{\pi'}(x\mid h)
       D_{\mathrm{TV}}\bigl(\pi'(\cdot\mid x),\pi(\cdot\mid x)\bigr)
       \\
 &\quad+D_{\mathrm{TV}}\bigl(d_{t,i}^{\pi'}(\cdot\mid h),d_{t,i}^{\pi}(\cdot\mid h)\bigr)\\
 &\le\alpha_T+
       D_{\mathrm{TV}}\bigl(d_{t,i}^{\pi'}(\cdot\mid h),d_{t,i}^{\pi}(\cdot\mid h)\bigr).
\end{align*}
The equality uses $\sum_a\pi(a\mid x)=1$ and the definition of total
variation. At $i=1$, both occupancy measures concentrate on $h$, so
their difference is zero. Iterating the bound $i-1$ times gives
\begin{equation}
 D_{\mathrm{TV}}\bigl(d_{t,i}^{\pi'}(\cdot\mid h),d_{t,i}^{\pi}(\cdot\mid h)\bigr)
 \le(i-1)\alpha_T.
 \label{eq:token-occupancy-tv}
\end{equation}
Padded positions have identical deterministic transitions and also
satisfy the one-step bound.

We next bound the mean old-policy token advantage under $\pi'$.
At each prefix $x$, the exact advantage satisfies
$\sum_a\pi(a\mid x)A_{t,i}^{\pi,\mathrm{tok}}(x,a)=0$;
this also holds at padded positions, where the advantage is zero.
Consequently,
\begin{align}
 \left|\mathbb E_{a\sim\pi'(\cdot\mid x)}
      [A_{t,i}^{\pi,\mathrm{tok}}(x,a)]\right|
 &=\left|\sum_a[\pi'(a\mid x)-\pi(a\mid x)]
                    A_{t,i}^{\pi,\mathrm{tok}}(x,a)\right|\notag\\
 &\le\sum_a|\pi'(a\mid x)-\pi(a\mid x)|
                   |A_{t,i}^{\pi,\mathrm{tok}}(x,a)|\notag\\
 &\le A_{\max}^{\mathrm{tok}}\sum_a|\pi'(a\mid x)-\pi(a\mid x)|\notag\\
 &=2A_{\max}^{\mathrm{tok}}D_{\mathrm{TV}}\bigl(\pi'(\cdot\mid x),\pi(\cdot\mid x)\bigr)\notag\\
 &\le2A_{\max}^{\mathrm{tok}}\alpha_T.
 \label{eq:token-mean-advantage-bound}
\end{align}
The first equality subtracts the zero mean under $\pi$. The first
inequality is the triangle inequality, and the second uses the uniform
bound $|A_{t,i}^{\pi,\mathrm{tok}}(x,a)|\le A_{\max}^{\mathrm{tok}}$.
The next equality is the definition of total variation; the final
inequality uses $\alpha_T$ as its supremum over all token prefixes.

Finally, express both surrogates using these conditional occupancy
measures. By the definition of $\mathcal S_M$ and
Eq.~\eqref{eq:exact-advantage-telescope},
\begin{align*}
 \mathcal S_M
 &=\sum_{t=0}^{T}\mathbb E_{h\sim d_t^\pi}
       \mathbb E_{a_t\sim\pi'(\cdot\mid h)}[A_t^\pi(h,a_t)]\\
 &=\sum_{t=0}^{T}\mathbb E_{h\sim d_t^\pi}
       \mathbb E_{a_t\sim\pi'(\cdot\mid h)}
       \left[\sum_{i=1}^{L_{\max}}
          A_{t,i}^{\pi,\mathrm{tok}}(h_{t,i},a_{t,i})\right]\\
 &=\sum_{t=0}^{T}\mathbb E_{h\sim d_t^\pi}
       \sum_{i=1}^{L_{\max}}\sum_x d_{t,i}^{\pi'}(x\mid h)
       \mathbb E_{a\sim\pi'(\cdot\mid x)}
          [A_{t,i}^{\pi,\mathrm{tok}}(x,a)].
\end{align*}
The last equality conditions each summand on its prefix: under $\pi'$,
the prefix has occupancy measure $d_{t,i}^{\pi'}(\cdot\mid h)$ and
the next token has distribution $\pi'(\cdot\mid x)$.
For $\mathcal S_T$, prefixes are sampled under $\pi$. At a fixed prefix,
the importance ratio changes only the distribution of the next token:
\begin{align*}
 \mathbb E_{a\sim\pi(\cdot\mid x)}
       [(\rho_{t,i}-1)A_{t,i}^{\pi,\mathrm{tok}}(x,a)]
 &=\sum_a\pi(a\mid x)
       \left(\frac{\pi'(a\mid x)}{\pi(a\mid x)}-1\right)
       A_{t,i}^{\pi,\mathrm{tok}}(x,a)\\
 &=\sum_a[\pi'(a\mid x)-\pi(a\mid x)]A_{t,i}^{\pi,\mathrm{tok}}(x,a)\\
 &=\mathbb E_{a\sim\pi'(\cdot\mid x)}[A_{t,i}^{\pi,\mathrm{tok}}(x,a)].
\end{align*}
The ratio is well defined by the support assumption, and the last
equality uses the zero old-policy mean. Conditioning $\mathcal S_T$
on the turn-start history and then on the token prefix gives
\begin{align*}
 \mathcal S_T
 &=\sum_{t=0}^{T}\mathbb E_{h\sim d_t^\pi}
       \sum_{i=1}^{L_{\max}}\sum_x d_{t,i}^{\pi}(x\mid h)
       \mathbb E_{a\sim\pi(\cdot\mid x)}
          [(\rho_{t,i}-1)A_{t,i}^{\pi,\mathrm{tok}}(x,a)]\\
 &=\sum_{t=0}^{T}\mathbb E_{h\sim d_t^\pi}
       \sum_{i=1}^{L_{\max}}\sum_x d_{t,i}^{\pi}(x\mid h)
       \mathbb E_{a\sim\pi'(\cdot\mid x)}
          [A_{t,i}^{\pi,\mathrm{tok}}(x,a)].
\end{align*}
Both expressions average turn-start histories under $d_t^\pi$; only
the conditional within-turn occupancy measures differ. Subtracting gives
\[
 \mathcal S_M-\mathcal S_T
 =\sum_{t=0}^{T}\mathbb E_{h\sim d_t^\pi}
       \sum_{i=1}^{L_{\max}}\sum_x
       [d_{t,i}^{\pi'}(x\mid h)-d_{t,i}^{\pi}(x\mid h)]
       \mathbb E_{a\sim\pi'(\cdot\mid x)}
          [A_{t,i}^{\pi,\mathrm{tok}}(x,a)].
\]
Taking absolute values, applying
Eq.~\eqref{eq:token-mean-advantage-bound}, and then using
Eq.~\eqref{eq:token-occupancy-tv} yields
\begin{align*}
 |\mathcal S_M-\mathcal S_T|
 &\le\sum_{t=0}^{T}\mathbb E_{h\sim d_t^\pi}
       \sum_{i=1}^{L_{\max}}\sum_x
       |d_{t,i}^{\pi'}(x\mid h)-d_{t,i}^{\pi}(x\mid h)|
       \left|\mathbb E_{a\sim\pi'(\cdot\mid x)}
          [A_{t,i}^{\pi,\mathrm{tok}}(x,a)]\right|\\
 &\le2A_{\max}^{\mathrm{tok}}\alpha_T
       \sum_{t=0}^{T}\mathbb E_{h\sim d_t^\pi}
       \sum_{i=1}^{L_{\max}}\sum_x
       |d_{t,i}^{\pi'}(x\mid h)-d_{t,i}^{\pi}(x\mid h)|\\
 &=4A_{\max}^{\mathrm{tok}}\alpha_T
       \sum_{t=0}^{T}\mathbb E_{h\sim d_t^\pi}
       \sum_{i=1}^{L_{\max}}
       D_{\mathrm{TV}}\bigl(d_{t,i}^{\pi'}(\cdot\mid h),d_{t,i}^{\pi}(\cdot\mid h)\bigr)\\
 &\le4A_{\max}^{\mathrm{tok}}\alpha_T^2
       \sum_{t=0}^{T}\sum_{i=1}^{L_{\max}}(i-1)\\
 &=2(T+1)A_{\max}^{\mathrm{tok}}L_{\max}(L_{\max}-1)\alpha_T^2.
\end{align*}
The first equality uses $\sum_x|p(x)-q(x)|=2D_{\mathrm{TV}}(p,q)$;
the last uses $\sum_{i=1}^{L_{\max}}(i-1)=L_{\max}(L_{\max}-1)/2$.
This proves Eq.~\eqref{eq:within-turn-shift}. Applying
Theorem~\ref{thm:macro-improvement} and then this bound gives
\begin{align*}
 \mathcal J(\pi')-\mathcal J(\pi)
 &\ge\mathcal S_M-2A_{\max}T(T+1)\alpha_M^2\\
 &\ge\mathcal S_T-|\mathcal S_M-\mathcal S_T|
       -2A_{\max}T(T+1)\alpha_M^2\\
 &\ge\mathcal S_T-2A_{\max}T(T+1)\alpha_M^2
       -2(T+1)A_{\max}^{\mathrm{tok}}L_{\max}(L_{\max}-1)\alpha_T^2,
\end{align*}
which is Eq.~\eqref{eq:token-bridge-bound}.
\end{proof}

\begin{proof}[Proof of the KL conversion in Eq.~\eqref{eq:token-to-macro-kl}]
Fix $h$ and include the stopping decision in the complete output.
Since $L(a)\le L_{\max}$, the log ratio is a bounded-length sum over
generated tokens. Conditioning each term on its prefix gives
\begin{equation}
 \begin{aligned}
 D_{\mathrm{KL}}(\pi(\cdot\mid h)\Vert\pi'(\cdot\mid h))
 &=\mathbb E_{a\sim\pi(\cdot\mid h)}
       \sum_{i=1}^{L(a)}
       \log\frac{\pi(a_i\mid h_i)}{\pi'(a_i\mid h_i)}\\
 &=\mathbb E_{a\sim\pi(\cdot\mid h)}\sum_{i=1}^{L(a)}
       D_{\mathrm{KL}}(\pi(\cdot\mid h_i)\Vert\pi'(\cdot\mid h_i))
 \le L_{\max}\delta.
 \end{aligned}
 \label{eq:kl-chain}
\end{equation}
Here logarithms are natural and the assumed finite token KL cap makes
the required absolute continuity explicit. Taking the supremum over
$h$ and applying Pinsker's inequality gives
$\alpha_M^2\le L_{\max}\delta/2$; at token prefixes it likewise gives
$\alpha_T^2\le\delta/2$.

When $R\in[0,1]$, all exact remaining-return values lie in $[0,1]$,
so $A_{\max}=A_{\max}^{\mathrm{tok}}=1$ is valid. Writing $L=L_{\max}$ and
substituting the two TV bounds into the macro and token bounds gives
\begin{align}
 \mathcal J(\pi')-\mathcal J(\pi)&\ge\mathcal S_M-T(T+1)L\delta,
 \label{eq:macro-kl-bound}\\
 \mathcal J(\pi')-\mathcal J(\pi)&\ge\mathcal S_T-T(T+1)L\delta
                         -(T+1)L(L-1)\delta.
 \label{eq:token-kl-bound}
\end{align}
For $\delta=\Delta/L$, the last penalty is $(T+1)(L-1)\Delta$,
as claimed in the main text.
\end{proof}

\subsection{Arbitrary Estimators and Action-Independent Baselines}
\label{sec:estimator-bound}

Let $\widehat A_t$ be any integrable scalar macro estimate computed on
old-policy rollouts, with its construction fixed during the update.
Assume all displayed importance-weighted expectations are finite.
Define its conditional mean error and population estimated improvement by
\begin{align}
 b_t(h,a)&=\mathbb E_\pi[\widehat A_t\mid h_t=h,a_t=a]
                 -A_t^\pi(h,a),\label{eq:conditional-mean-error}\\
 \widehat{\mathcal S}_M
   &=\mathbb E_\pi\sum_{t=0}^{T}(\rho_t-1)\widehat A_t.
   \label{eq:estimated-macro-surrogate}
\end{align}
The expectation includes future rollout randomness. The hat denotes
an estimated advantage, not an empirical average.

\begin{theorem}[Estimator-robust policy improvement]
\label{thm:estimator-improvement}
Under Theorem~\ref{thm:macro-improvement}, suppose state-only functions
$c_t$ satisfy $|b_t(h,a)-c_t(h)|\le\epsilon_{A,t}$ uniformly.
Then
\begin{equation}
 \begin{aligned}
 \mathcal J(\pi')-\mathcal J(\pi)\ge{}&\widehat{\mathcal S}_M
       -2A_{\max}T(T+1)\alpha_M^2\\
       &-2\alpha_M\sum_{t=0}^{T}\epsilon_{A,t}.
 \end{aligned}
 \label{eq:estimator-improvement}
\end{equation}
In particular, a common bound $\epsilon_A$ gives an estimation penalty
$2(T+1)\alpha_M\epsilon_A$.
\end{theorem}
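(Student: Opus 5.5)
The plan is to reduce the estimated surrogate $\widehat{\mathcal S}_M$ to the exact surrogate $\mathcal S_M$ plus a bias term, and then bound that bias using total variation, in the same way the advantage term was handled in Theorem~\ref{thm:macro-improvement}.

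First, I condition inside the expectation defining $\widehat{\mathcal S}_M$ on $(h_t,a_t)$. The ratio $\rho_t$ is a function of $(h_t,a_t)$ alone, so the tower property gives
\[
 \widehat{\mathcal S}_M
 =\mathbb E_\pi\sum_{t=0}^{T}(\rho_t-1)\bigl(A_t^\pi+b_t\bigr)
 =\mathcal S_M+\sum_{t=0}^{T}\mathbb E_{h\sim d_t^\pi}
   \sum_a\bigl[\pi'(a\mid h)-\pi(a\mid h)\bigr]b_t(h,a).
\]
The first piece is $\mathcal S_M$ by Lemma~\ref{lem:macro-pdl}. In the second piece I have rewritten $\pi(a\mid h)(\rho_t-1)$ as $\pi'(a\mid h)-\pi(a\mid h)$, which uses common support.

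Second, I subtract the state-only function $c_t(h)$ from $b_t$. Because $\sum_a[\pi'(a\mid h)-\pi(a\mid h)]=0$, this subtraction does not change the bias term. This step is the reason action-independent baselines cost nothing. Applying the uniform bound $|b_t-c_t|\le\epsilon_{A,t}$ then gives
\[
 \Bigl|\sum_a[\pi'-\pi](a\mid h)\,b_t(h,a)\Bigr|
 \le\epsilon_{A,t}\,\|\pi'(\cdot\mid h)-\pi(\cdot\mid h)\|_1
 \le 2\alpha_M\epsilon_{A,t}.
\]
Averaging over $d_t^\pi$ and summing over $t$ yields
\[
 |\widehat{\mathcal S}_M-\mathcal S_M|\le 2\alpha_M\sum_{t=0}^{T}\epsilon_{A,t}.
\]

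Third, I combine this with Theorem~\ref{thm:macro-improvement}:
\[
 \mathcal J(\pi')-\mathcal J(\pi)
 \ge\mathcal S_M-2A_{\max}T(T+1)\alpha_M^2
 \ge\widehat{\mathcal S}_M-2\alpha_M\sum_{t=0}^{T}\epsilon_{A,t}-2A_{\max}T(T+1)\alpha_M^2,
\]
which is Eq.~\eqref{eq:estimator-improvement}. Setting $\epsilon_{A,t}=\epsilon_A$ for every $t$ gives the penalty $2(T+1)\alpha_M\epsilon_A$.

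The main obstacle is the first step, which needs care with integrability and with where $b_t$ is defined. $\widehat A_t$ may depend on future rollout randomness, for example the later rewards and critic values entering GAE. So I must check that $\mathbb E_\pi[(\rho_t-1)\widehat A_t\mid h_t,a_t]=(\rho_t-1)\mathbb E_\pi[\widehat A_t\mid h_t,a_t]$. This holds because $\rho_t$ is measurable with respect to $(h_t,a_t)$ and the assumption of finite importance-weighted expectations justifies the exchange. In addition, $b_t$ only needs to be defined on the support of $\pi$ at old-policy histories, which absolute continuity of $\pi'$ with respect to $\pi$ guarantees.
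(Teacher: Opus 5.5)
Your proposal is correct and follows the paper's proof essentially step for step. You condition on $(h_t,a_t)$ to write $\widehat{\mathcal S}_M-\mathcal S_M=\sum_{t}\mathbb E_{h\sim d_t^\pi}\sum_a[\pi'(a\mid h)-\pi(a\mid h)]\,b_t(h,a)$, cancel the state-only $c_t$ because this signed measure has zero total mass, bound each term by $2\alpha_M\epsilon_{A,t}$, and substitute into Theorem~\ref{thm:macro-improvement}; your remarks on the measurability of $\rho_t$ and on integrability correspond to the paper's appeal to its finiteness assumptions.
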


\begin{proof}[Proof of Theorem~\ref{thm:estimator-improvement}]
The estimator is computed under the old-policy rollout law, and
$\rho_t-1$ is a function of $(h_t,a_t)$. Conditioning on that pair gives
\begin{align*}
 \widehat{\mathcal S}_M-\mathcal S_M
 &=\sum_{t=0}^{T}\mathbb E_\pi
       [(\rho_t-1)(\widehat A_t-A_t^\pi(h_t,a_t))]\\
 &=\sum_{t=0}^{T}\mathbb E_{h\sim d_t^\pi}
       \sum_a\pi(a\mid h)
       \left(\frac{\pi'(a\mid h)}{\pi(a\mid h)}-1\right)b_t(h,a)\\
 &=\sum_{t=0}^{T}\mathbb E_{h\sim d_t^\pi}
       \sum_a[\pi'(a\mid h)-\pi(a\mid h)]b_t(h,a).
\end{align*}
The integrability assumptions justify these conditional expectations.
For each fixed $h$, the state-only part cancels because
\[
 \sum_a[\pi'(a\mid h)-\pi(a\mid h)]c_t(h)
 =c_t(h)(1-1)=0.
\]
Subtracting this zero term before taking absolute values yields
\begin{align*}
 |\widehat{\mathcal S}_M-\mathcal S_M|
 &\le\sum_{t=0}^{T}\mathbb E_{h\sim d_t^\pi}
       \sum_a|\pi'(a\mid h)-\pi(a\mid h)|\,|b_t(h,a)-c_t(h)|\\
 &\le\sum_{t=0}^{T}2\alpha_M\epsilon_{A,t}.
\end{align*}
Therefore $\mathcal S_M\ge\widehat{\mathcal S}_M
-2\alpha_M\sum_t\epsilon_{A,t}$. Substitution into
Eq.~\eqref{eq:exact-macro-bound} proves the theorem. If all
$\epsilon_{A,t}\le\epsilon_A$, summing $T+1$ terms gives the stated
uniform-error penalty.
\end{proof}

\subsection{Boundary Critic Error and the Turn-Credit Guarantee}
\label{app:turn-credit-error}

For the turn GAE in Eq.~\eqref{eq:turn-gae}, write
$e_t(h_t)=V_{t,1}-V_t^\pi(h_t)$ for the boundary critic error.

\begin{corollary}[Policy improvement with turn-level GAE]
\label{cor:macro-gae-improvement}
Under the assumptions of Theorem~\ref{thm:estimator-improvement},
suppose rollouts reach true termination, $e_{T+1}=0$,
$\lambda\in[0,1]$, and $|e_t(h)|\le\epsilon_B$. Then one may take
\begin{equation}
 \epsilon_{A,t}=\epsilon_B(1-\lambda^{T-t}),\qquad
 \epsilon_A=\epsilon_B(1-\lambda^{T}).
 \label{eq:macro-gae-error}
\end{equation}
Thus, with $\widehat{\mathcal S}_M$ formed from $A_t^{\mathrm{turn}}$,
\begin{equation}
 \begin{aligned}
 \mathcal J(\pi')-\mathcal J(\pi)\ge{}&\widehat{\mathcal S}_M
           -2A_{\max}T(T+1)\alpha_M^2\\
           &-2(T+1)\alpha_M\epsilon_B(1-\lambda^{T}).
 \end{aligned}
 \label{eq:macro-gae-improvement}
\end{equation}
\end{corollary}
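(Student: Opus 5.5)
The plan is to verify the hypothesis of Theorem~\ref{thm:estimator-improvement} for the turn-level GAE estimator $\widehat A_t=A_t^{\mathrm{turn}}$ of Eq.~\eqref{eq:turn-gae}. Concretely, I would take the state-only function $c_t(h)=-e_t(h)$ and show that $|b_t(h,a)-c_t(h)|\le\epsilon_B(1-\lambda^{T-t})$ uniformly. Substituting this into Eq.~\eqref{eq:estimator-improvement} then gives the result.

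\textbf{Step 1: split the TD residual.} Write the critic boundary values as $V_{t,1}=V_t^\pi(h_t)+e_t(h_t)$, with $V_{T+1,1}=0=V_{T+1}^\pi$, so that $e_{T+1}=0$. Substituting into Eq.~\eqref{eq:turn-td} gives the decomposition
\[
 \delta_t^{\mathrm{turn}}=\delta_t^\pi+e_{t+1}(h_{t+1})-e_t(h_t),
\]
where $\delta_t^\pi$ is the exact one-turn temporal difference from the proof of Lemma~\ref{lem:macro-pdl}. This yields
\[
 A_t^{\mathrm{turn}}=\sum_{\ell=0}^{T-t}\lambda^\ell\delta_{t+\ell}^\pi
 +\sum_{\ell=0}^{T-t}\lambda^\ell\bigl(e_{t+\ell+1}-e_{t+\ell}\bigr).
\]

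\textbf{Step 2: conditional mean of the exact part.} Condition on $(h_t,a_t)=(h,a)$ under the old-policy rollout law.
\begin{itemize}
 \item For $\ell=0$, Eq.~\eqref{eq:macro-conditional-td} with $\pi'=\pi$ gives $A_t^\pi(h,a)$.
 \item For $\ell\ge1$, the tower property first conditions on $(h_{t+\ell},a_{t+\ell})$, giving $A_{t+\ell}^\pi$. It then conditions on $h_{t+\ell}$ alone. Since $a_{t+\ell}\sim\pi$ in old-policy rollouts, Eq.~\eqref{eq:macro-bellman-centering} makes each such term vanish.
\end{itemize}
Hence this part contributes exactly $A_t^\pi(h,a)$ to the conditional mean. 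This is where the estimator being computed on old-policy rollouts is essential.

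\textbf{Step 3: reorganize the error part.} Apply summation by parts to the second sum:
\[
 \sum_{\ell=0}^{T-t}\lambda^\ell(e_{t+\ell+1}-e_{t+\ell})
 =-e_t(h_t)+\sum_{m=1}^{T-t}(\lambda^{m-1}-\lambda^m)\,e_{t+m}(h_{t+m})+\lambda^{T-t}e_{T+1}.
\]
The last term is zero because $e_{T+1}=0$. It follows that
\[
 b_t(h,a)+e_t(h)=\sum_{m=1}^{T-t}(\lambda^{m-1}-\lambda^m)\,\mathbb E_\pi[e_{t+m}\mid h,a].
\]
Since $\lambda\in[0,1]$, every coefficient is nonnegative, and the coefficients telescope to $1-\lambda^{T-t}$. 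Using $|e|\le\epsilon_B$ then gives $|b_t-c_t|\le\epsilon_B(1-\lambda^{T-t})=\epsilon_{A,t}$.

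\textbf{Step 4: assemble the bound.} Each $\epsilon_{A,t}$ is at most its value at $t=0$, namely $\epsilon_B(1-\lambda^T)$, which is therefore a valid common $\epsilon_A$. The uniform-bound clause of Theorem~\ref{thm:estimator-improvement} then yields Eq.~\eqref{eq:macro-gae-improvement}.

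\textbf{Main obstacle: bookkeeping at termination.} Two boundary issues need care.
\begin{itemize}
 \item The boundary conventions of Eq.~\eqref{eq:value-boundary} must match $V_{T+1}^\pi=0$; this is what guarantees $e_{T+1}=0$ and uses the true-termination assumption.
 \item Under the fixed $T+1$ horizon, post-termination terms are zero. They therefore contribute neither to the exact part nor to the error part, and this should be checked explicitly.
\end{itemize}
The delicate point is the vanishing of the $\ell\ge1$ exact terms in Step~2, which requires $c_t$ to depend only on $h$. The rest is routine.
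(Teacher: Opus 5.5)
Your proposal is correct and follows the paper's proof essentially step for step. It uses the same split $\delta_t^{\mathrm{turn}}=\delta_t^\pi+e_{t+1}-e_t$, the same tower-property argument that the exact-critic trace has conditional mean $A_t^\pi(h,a)$, the same summation by parts with $c_t(h)=-e_t(h)$, and the same final step $1-\lambda^{T-t}\le1-\lambda^T$; your telescoping form $\lambda^{m-1}-\lambda^m$ also absorbs the $\lambda=1$ case that the paper treats separately.

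One small correction to your closing remark: the $\ell\ge1$ exact terms vanish because later actions are drawn from $\pi$, as you correctly say in Step~2. The state-only form of $c_t$ is a separate requirement, used by Theorem~\ref{thm:estimator-improvement} to get $\sum_a[\pi'(a\mid h)-\pi(a\mid h)]c_t(h)=0$.
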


\begin{proof}[Proof of Corollary~\ref{cor:macro-gae-improvement}]
Use the exact-critic residual $\delta_s^\pi$ defined in the proof of
Lemma~\ref{lem:macro-pdl}. First fix $(h_t,a_t)=(h,a)$ and continue
with $\pi$. Equation~\eqref{eq:macro-conditional-td} gives
$\mathbb E_\pi[\delta_t^\pi\mid h_t=h,a_t=a]=A_t^\pi(h,a)$.
For every $s>t$, the full history $h_s$ contains $(h_t,a_t)$, and
\[
 \mathbb E_\pi[\delta_s^\pi\mid h_s]
 =\sum_{a_s}\pi(a_s\mid h_s)A_s^\pi(h_s,a_s)=0.
\]
The tower property consequently gives
$\mathbb E_\pi[\delta_s^\pi\mid h_t=h,a_t=a]=0$ for $s>t$.
It follows that the entire exact-critic trace has conditional mean
\[
 \mathbb E_\pi\!\left[\sum_{\ell=0}^{T-t}\lambda^\ell
           \delta_{t+\ell}^\pi\,\middle|\,h_t=h,a_t=a\right]
 =A_t^\pi(h,a).
\]

Replacing exact boundary values by critic predictions changes the
residual at turn $s$ by $e_{s+1}(h_{s+1})-e_s(h_s)$.
Suppress only the history arguments in the following pathwise sum:
\begin{align*}
 A_t^{\mathrm{turn}}-\sum_{\ell=0}^{T-t}\lambda^\ell\delta_{t+\ell}^\pi
 &=\sum_{\ell=0}^{T-t}\lambda^\ell(e_{t+\ell+1}-e_{t+\ell})\\
 &=\sum_{j=1}^{T-t+1}\lambda^{j-1}e_{t+j}
        -e_t-\sum_{j=1}^{T-t}\lambda^je_{t+j}\\
 &=-e_t+(1-\lambda)\sum_{j=1}^{T-t}\lambda^{j-1}e_{t+j}
        +\lambda^{T-t}e_{T+1}.
\end{align*}
True termination makes the last term zero. Taking the conditional
expectation and subtracting the exact advantage therefore gives
\begin{equation}
 b_t(h,a)=-e_t(h)+(1-\lambda)\sum_{j=1}^{T-t}\lambda^{j-1}
          \mathbb E_\pi[e_{t+j}(h_{t+j})\mid h_t=h,a_t=a].
 \label{eq:macro-gae-error-identity}
\end{equation}
Choose the state-only baseline $c_t(h)=-e_t(h)$. For $0\le\lambda<1$,
\[
 |b_t(h,a)-c_t(h)|
 \le(1-\lambda)\sum_{j=1}^{T-t}\lambda^{j-1}\epsilon_B
 =\epsilon_B(1-\lambda^{T-t}).
\]
For $\lambda=1$, the finite sum in
Eq.~\eqref{eq:macro-gae-error-identity} is multiplied by zero, so the
same bound holds with value zero. Empty sums vanish and $\lambda^0=1$,
including when $\lambda=0$; thus the bound also covers $t=T$ and $T=0$.

Theorem~\ref{thm:estimator-improvement} now gives an estimation penalty
$2\alpha_M\epsilon_B\sum_{t=0}^{T}(1-\lambda^{T-t})$.
Since $1-\lambda^{T-t}\le1-\lambda^T$, this is at most
$2(T+1)\alpha_M\epsilon_B(1-\lambda^T)$, proving
Eqs.~\eqref{eq:macro-gae-error} and~\eqref{eq:macro-gae-improvement}.
\end{proof}

The cancellation of $e_{T+1}$ uses true termination. At a nonterminal
truncation, that term must be retained unless its bootstrap error is
separately controlled.

\subsection{Token GAE and Internal Critic Error}
\label{sec:gae}

For comparison, standard token GAE applies trace decay at every generated
assistant token:
\begin{equation}
 A_{t,i}^{\mathrm{GAE}}(\lambda)
 =\delta_{t,i}+\lambda A_{t,i+1}^{\mathrm{GAE}}(\lambda).
 \label{eq:gae}
\end{equation}
The trace continues across turns using
$A_{t,L_t+1}^{\mathrm{GAE}}(\lambda):=A_{t+1,1}^{\mathrm{GAE}}(\lambda)$,
with $A_{T+1,1}^{\mathrm{GAE}}(\lambda):=0$ at termination.
Ordering assistant actions by $k(t,i)=\sum_{u<t}L_u+i$ gives
$N=\sum_{t=0}^{T}L_t$ action positions. In this flattened notation,
\begin{equation}
 A_k^{\mathrm{GAE}}(\lambda)=\sum_{j=k}^{N}\lambda^{j-k}\delta_j.
 \label{eq:gae-unrolled}
\end{equation}
Environment tokens introduce neither TD residuals nor decay steps, but
credit across a turn still decays at each assistant token.
At $\lambda=1$, the value differences telescope to
\begin{equation}
 A_k^{\mathrm{GAE}}(1)=R(\tau)-V_k.
 \label{eq:gae-telescoping}
\end{equation}

To compare trace lengths, specialize to exactly $T+1$ turns of $L$ tokens
each, with $N=(T+1)L$. At flattened position $j=k(t,i)$, let
$e_j^T(h)$ be the critic prediction minus the exact old-policy token value
at history $h$, and set $e_{N+1}^T=0$. All critic predictions are frozen
during the policy update. Inside expectations, errors are evaluated at
the corresponding random histories.

\begin{proposition}[Token GAE error and policy improvement]
\label{prop:token-gae-error}
Under the assumptions of Theorem~\ref{thm:scale-penalties}, suppose
rollouts reach true termination, $q\in[0,1]$, and $|e_j^T(h)|\le\epsilon_T$.
The conditional bias of token GAE relative to the exact token advantage is
\begin{equation}
 b_j^T(h,a)=-e_j^T(h)+(1-q)\sum_{m=1}^{N-j}q^{m-1}
 \mathbb E_\pi[e_{j+m}^T\mid h_{t,i}=h,a_{t,i}=a].
 \label{eq:token-gae-bias}
\end{equation}
With $\widehat{\mathcal S}_T=\mathbb E_\pi\sum_{j=1}^N
(\rho_j-1)A_j^{\mathrm{GAE}}(q)$ and finite displayed expectations,
\begin{equation}
 \begin{aligned}
 \mathcal J(\pi')-\mathcal J(\pi)\ge{}&\widehat{\mathcal S}_T
 -2A_{\max}T(T+1)\alpha_M^2\\
 &-2(T+1)A_{\max}^{\mathrm{tok}}L(L-1)\alpha_T^2
 -2N\alpha_T\epsilon_T(1-q^{N-1}).
 \end{aligned}
 \label{eq:token-gae-improvement}
\end{equation}
\end{proposition}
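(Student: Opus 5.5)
The plan is to mirror the proof of Corollary~\ref{cor:macro-gae-improvement} on the flattened token axis, and then chain three bounds. First, the estimator bound on the token axis, which is the analogue of Theorem~\ref{thm:estimator-improvement} for $\widehat{\mathcal S}_T$ versus $\mathcal S_T$. Second, Theorem~\ref{thm:scale-penalties}, relating $\mathcal S_T$ to $\mathcal S_M$. Third, Theorem~\ref{thm:macro-improvement}.

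For the bias identity \eqref{eq:token-gae-bias}, introduce the exact-critic token residual $\delta_j^{\pi}=r_j+V^{\pi,\mathrm{tok}}_{j+1}-V^{\pi,\mathrm{tok}}_j$. At turn boundaries it bootstraps to the exact value at the next turn-start history, so it averages over the environment response. From the telescoping computation in the proof of Theorem~\ref{thm:scale-penalties}:
\begin{itemize}
\item conditioned on $(h_{t,i},a_{t,i})$, this residual has mean $A_{t,i}^{\pi,\mathrm{tok}}$;
\item every later residual has conditional mean zero, by the tower property and the zero $\pi$-mean of exact token advantages at each prefix.
\end{itemize}
Hence the exact-critic trace $\sum_m q^m\delta^\pi_{j+m}$ is conditionally unbiased. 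Replacing exact values by critic predictions changes residual $j$ by $e^T_{j+1}-e^T_j$. The same Abel summation as in Corollary~\ref{cor:macro-gae-improvement} gives
\[
-e^T_j+(1-q)\sum_{m\ge1}q^{m-1}e^T_{j+m}+q^{N-j}e^T_{N+1}.
\]
The last term vanishes by true termination. Taking the conditional expectation yields \eqref{eq:token-gae-bias}. The one point to check is the boundary convention, where $V_{t,L_t+1}=V_{t+1,1}$ is evaluated at the random next history. This is harmless because the error $e^T$ is defined at that random history.

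Next, the token-level estimator step. Exactly as in the proof of Theorem~\ref{thm:estimator-improvement}, condition on the prefix and token to get
\[
\widehat{\mathcal S}_T-\mathcal S_T=\sum_j\mathbb E_{x\sim d^\pi_j}\sum_a[\pi'(a\mid x)-\pi(a\mid x)]\,b^T_j(x,a).
\]
Subtract the state-only term $c_j(x)=-e^T_j(x)$, which cancels. Bound $|b^T_j-c_j|\le\epsilon_T(1-q^{N-j})$ using the geometric sum, with the $q=1$ and empty-sum cases handled as in the corollary. Each position then contributes at most $2\alpha_T\epsilon_T(1-q^{N-j})$. The factor $N$ comes from the number of positions, with one term per flattened action. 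The stated bound has $1-q^{N-1}$, which dominates $1-q^{N-j}$ for every $j\ge1$, so the sum is at most $2N\alpha_T\epsilon_T(1-q^{N-1})$. Importance ratios here are the per-token ratios $\rho_j$, and the occupancy is the old-policy one, so no shift term arises at this stage.

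Finally, combine the pieces:
\[
\mathcal J(\pi')-\mathcal J(\pi)\ge\mathcal S_M-2A_{\max}T(T+1)\alpha_M^2,
\qquad
\mathcal S_M\ge\mathcal S_T-2(T+1)A^{\mathrm{tok}}_{\max}L(L-1)\alpha_T^2,
\]
from Theorem~\ref{thm:macro-improvement} and Theorem~\ref{thm:scale-penalties} with $L_{\max}=L$. Together with $\mathcal S_T\ge\widehat{\mathcal S}_T-2N\alpha_T\epsilon_T(1-q^{N-1})$, this gives \eqref{eq:token-gae-improvement}.

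The main obstacle is the bias identity. It requires carefully verifying that the exact token-value residuals form a martingale-difference sequence across turn boundaries. Environment responses enter between positions $k(t,L_t)$ and $k(t+1,1)$, so the conditioning must use full histories including observations, not just token prefixes. I would handle this by defining $e^T$ and all conditional expectations on full histories $h_{t,i}$, which already contain all prior observations.
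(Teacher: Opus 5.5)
Your proposal is correct and follows the paper's proof essentially step for step: the tower-property argument makes the exact-critic token trace conditionally unbiased, and Abel summation with $e_{N+1}^T=0$ gives the bias identity. The state-only term $-e_j^T$ then cancels in the bound $|\widehat{\mathcal S}_T-\mathcal S_T|\le 2\alpha_T\epsilon_T\sum_j(1-q^{N-j})\le 2N\alpha_T\epsilon_T(1-q^{N-1})$, and the final chain through Theorem~\ref{thm:macro-improvement} and Theorem~\ref{thm:scale-penalties} is what the paper packages as Eq.~\eqref{eq:token-bridge-bound}.
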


\begin{proof}[Proof of Proposition~\ref{prop:token-gae-error}]
Write $\delta_j^\pi$ for the token TD residual formed with exact values.
The token Bellman equation gives
$\mathbb E_\pi[\delta_j^\pi\mid h_{t,i}=h,a_{t,i}=a]
=A_{t,i}^{\pi,\mathrm{tok}}(h,a)$.
At each later position $\ell>j$, the token action is sampled under $\pi$,
so its exact advantage has conditional mean zero given its prefix.
That prefix contains the fixed earlier history and action. The tower
property therefore gives
$\mathbb E_\pi[\delta_\ell^\pi\mid h_{t,i}=h,a_{t,i}=a]=0$.
Consequently the conditional mean of
$\sum_{m=0}^{N-j}q^m\delta_{j+m}^\pi$ is the exact advantage at position $j$.

The difference between the estimated and exact residuals is
$e_{j+1}^T-e_j^T$. Collecting its coefficients in the finite trace gives
\begin{align*}
 A_j^{\mathrm{GAE}}(q)-\sum_{m=0}^{N-j}q^m\delta_{j+m}^\pi
 &=\sum_{m=0}^{N-j}q^m(e_{j+m+1}^T-e_{j+m}^T)\\
 &=-e_j^T+\sum_{m=1}^{N-j}(q^{m-1}-q^m)e_{j+m}^T
       +q^{N-j}e_{N+1}^T\\
 &=-e_j^T+(1-q)\sum_{m=1}^{N-j}q^{m-1}e_{j+m}^T.
\end{align*}
Taking conditional expectations proves Eq.~\eqref{eq:token-gae-bias}.
The geometric sum bounds the remaining action-dependent part by
\[
 |b_j^T(h,a)+e_j^T(h)|\le\epsilon_T(1-q^{N-j}).
\]
For $q=1$ this follows directly from the zero factor $1-q$; for $j=N$
the sum is empty. We use $q^0=1$ also at $q=0$.

Condition on each old-policy token history in the difference of surrogates.
The importance ratio changes only the next-token distribution. Moreover,
$\sum_a[\pi'(a\mid h)-\pi(a\mid h)]e_j^T(h)=0$. Thus
\begin{align*}
 |\widehat{\mathcal S}_T-\mathcal S_T|
 &=\left|\sum_{j=1}^N\mathbb E_{h\sim d_j^{\pi,\mathrm{tok}}}
     \sum_a[\pi'(a\mid h)-\pi(a\mid h)]
                    [b_j^T(h,a)+e_j^T(h)]\right|\\
 &\le2\alpha_T\epsilon_T\sum_{j=1}^N(1-q^{N-j})
 \le2N\alpha_T\epsilon_T(1-q^{N-1}),
\end{align*}
where $d_j^{\pi,\mathrm{tok}}$ denotes the old-policy history law at
flattened position $j$. Substituting the resulting lower bound on
$\mathcal S_T$ into Eq.~\eqref{eq:token-bridge-bound} proves the claim.
\end{proof}

The conditioning in Eq.~\eqref{eq:token-gae-bias} fixes one token, not a
complete turn. This bias therefore cannot be used as the scalar macro
bias in Theorem~\ref{thm:estimator-improvement}. The bound above includes
both the within-turn distribution shift and errors at internal prefixes.

\subsection{Matched-Budget Comparison and an Internal-Error Example}
\label{app:gae-scale-comparison}

\begin{corollary}[Comparison under a common token KL budget]
\label{cor:matched-budget}
Assume the hypotheses of Corollary~\ref{cor:macro-gae-improvement} and
Proposition~\ref{prop:token-gae-error}.
As in Appendix~\ref{sec:gae}, consider $(T+1)$ turns of $L$ tokens each,
with $N=(T+1)L$. For $R\in[0,1]$, a uniform token KL cap $\delta$
gives the bounds $\mathcal J(\pi')-\mathcal J(\pi)\ge
\widehat{\mathcal S}_M-P_M$ and $\mathcal J(\pi')-\mathcal J(\pi)\ge
\widehat{\mathcal S}_T-P_T$, with the penalty envelopes
\begin{align}
 P_M={}&T(T+1)L\delta\notag\\
       &+2(T+1)\sqrt{L\delta/2}\,\epsilon_B(1-\lambda^{T}),
       \label{eq:macro-penalty-envelope}\\
 P_T={}&T(T+1)L\delta+(T+1)L(L-1)\delta\notag\\
       &+2(T+1)L\sqrt{\delta/2}\,\epsilon_T(1-q^{(T+1)L-1}).
       \label{eq:token-penalty-envelope}
\end{align}
If a common critic-error cap $\epsilon_B=\epsilon_T=\epsilon>0$ is
available, $T>0$, $\delta>0$, and $q=\lambda<1$, the token-to-macro
ratio of the estimation terms is
\begin{equation}
 \sqrt L\,\frac{1-\lambda^{(T+1)L-1}}{1-\lambda^{T}}\ge\sqrt L.
 \label{eq:estimation-scale-ratio}
\end{equation}
This also exceeds $\sqrt L$ for $L>1$ when matching boundary decay
$q^L=\lambda\in(0,1)$, since $q^{(T+1)L-1}=\lambda^{T+1-1/L}$.
At $q=\lambda=1$ both estimation terms vanish; their ratio is undefined.
\end{corollary}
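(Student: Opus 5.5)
The plan is to obtain both penalty envelopes by substituting the KL-to-TV conversions from Eq.~\eqref{eq:kl-chain} into the two earlier bounds, and then to compare the estimation terms directly.

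\textbf{Step 1: the bounds on $\alpha_M$ and $\alpha_T$.} From the KL chain rule and Pinsker's inequality, a uniform token KL cap $\delta$ gives
\[
\alpha_M\le\sqrt{L\delta/2},\qquad \alpha_T\le\sqrt{\delta/2}.
\]
Since $R\in[0,1]$, we may take $A_{\max}=A_{\max}^{\mathrm{tok}}=1$.

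\textbf{Step 2: the macro envelope $P_M$.} Substitute these bounds into Eq.~\eqref{eq:macro-gae-improvement} from Corollary~\ref{cor:macro-gae-improvement}. The shift term becomes $2T(T+1)\alpha_M^2\le T(T+1)L\delta$. The estimation term becomes $2(T+1)\alpha_M\epsilon_B(1-\lambda^T)\le 2(T+1)\sqrt{L\delta/2}\,\epsilon_B(1-\lambda^T)$. Both penalties are monotone in $\alpha_M$, so the upper bounds may be substituted directly.

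\textbf{Step 3: the token envelope $P_T$.} Substitute into Eq.~\eqref{eq:token-gae-improvement} from Proposition~\ref{prop:token-gae-error} with $N=(T+1)L$. The macro shift term gives $T(T+1)L\delta$. The within-turn term gives $2(T+1)L(L-1)\cdot\delta/2=(T+1)L(L-1)\delta$. The estimation term gives $2N\sqrt{\delta/2}\,\epsilon_T(1-q^{N-1})$, which equals the stated $2(T+1)L\sqrt{\delta/2}\,\epsilon_T(1-q^{(T+1)L-1})$.

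\textbf{Step 4: the ratio of estimation terms.} With a common $\epsilon$, $q=\lambda<1$, $T>0$ and $\delta>0$, the denominator $1-\lambda^T$ is positive. Dividing the two estimation terms gives
\[
\frac{L\sqrt{\delta/2}}{\sqrt{L\delta/2}}\cdot\frac{1-\lambda^{N-1}}{1-\lambda^T}
=\sqrt L\,\frac{1-\lambda^{N-1}}{1-\lambda^{T}}.
\]
Because $N-1=(T+1)L-1\ge T$ when $L\ge1$, and $\lambda\in[0,1)$, we have $\lambda^{N-1}\le\lambda^T$. The ratio is therefore at least $\sqrt L$.

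\textbf{Step 5: the matched boundary decay $q^L=\lambda\in(0,1)$.} In this case $q^{N-1}=\lambda^{T+1-1/L}$. For $L>1$ the exponent $T+1-1/L$ is strictly greater than $T$, and $\lambda\in(0,1)$, so $\lambda^{T+1-1/L}<\lambda^T$. The fraction then strictly exceeds one, and the ratio strictly exceeds $\sqrt L$.

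\textbf{Step 6: the case $q=\lambda=1$.} Both factors $1-\lambda^T$ and $1-q^{N-1}$ vanish, so both estimation terms are zero and the ratio is $0/0$, hence undefined.

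The only delicate point is Step 5, where strict inequality needs both $L>1$ and $\lambda<1$, together with the sign and positivity conventions. Everything else is direct substitution into earlier results.
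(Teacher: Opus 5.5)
Your proposal is correct and matches the paper's proof essentially step for step. Like the paper, you set $A_{\max}=A_{\max}^{\mathrm{tok}}=1$ and substitute the Pinsker/KL-chain bounds $\alpha_M\le\sqrt{L\delta/2}$ and $\alpha_T\le\sqrt{\delta/2}$ into the nonnegative-coefficient penalties of Corollary~\ref{cor:macro-gae-improvement} and Proposition~\ref{prop:token-gae-error}. You then obtain the ratio claims from the exponent comparison $(T+1)L-1\ge T$, which becomes the strict comparison $T+1-1/L>T$ under matched boundary decay.
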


\begin{proof}[Proof of Corollary~\ref{cor:matched-budget}]
Because $R\in[0,1]$, exact values and action values are in $[0,1]$,
so both macro and token advantages have magnitude at most one.
Pinsker's inequality and Eq.~\eqref{eq:token-to-macro-kl} give
$\alpha_T^2\le\delta/2$ and $\alpha_M^2\le L\delta/2$.
All penalty coefficients in the two improvement bounds are nonnegative.
Replacing the advantage bounds by one, $\alpha_T$ by $\sqrt{\delta/2}$,
and $\alpha_M$ by $\sqrt{L\delta/2}$ therefore gives valid upper bounds
on those penalties. In particular the two distribution-shift terms become
\[
 2T(T+1)\frac{L\delta}{2}=T(T+1)L\delta,\qquad
 2(T+1)L(L-1)\frac{\delta}{2}=(T+1)L(L-1)\delta.
\]
The estimation terms undergo the same substitution, yielding
Eqs.~\eqref{eq:macro-penalty-envelope}--\eqref{eq:token-penalty-envelope}.

Under the common positive error cap, $T>0$, $\delta>0$, and
$q=\lambda<1$, the macro estimation term is positive. Dividing gives
\[
 \frac{2(T+1)L\sqrt{\delta/2}\,\epsilon(1-\lambda^{(T+1)L-1})}
 {2(T+1)\sqrt{L\delta/2}\,\epsilon(1-\lambda^T)}
 =\sqrt L\frac{1-\lambda^{(T+1)L-1}}{1-\lambda^T}.
\]
Since $(T+1)L-1\ge T$ and $0\le\lambda<1$, the numerator in the final
fraction is at least its denominator. This proves
Eq.~\eqref{eq:estimation-scale-ratio}. If instead $q^L=\lambda\in(0,1)$,
the exponent becomes $T+1-1/L$, strictly greater than $T$ for $L>1$.
Thus $1-q^{(T+1)L-1}>1-\lambda^T$, proving the strict comparison.
For $q=\lambda=1$, each estimation term is zero directly from its formula.
\end{proof}

The common error cap is an explicit comparison assumption; it does not
follow merely from using one critic. These compare certified worst-case
penalties, not actual biases or achieved improvements. In particular,
the two surrogates on the right-hand sides need not be equal.

\begin{proposition}[Internal critic error can reverse the update direction]
\label{prop:internal-error-example}

Consider one turn of $L>1$ tokens. Its first token chooses a fair sign
$Y\in\{-1,1\}$ under the old policy; the remaining tokens are
deterministic. The terminal reward is $R=1/2+cY$, with $0<c<1/2$.
Use exact boundary values $V_{0,1}=1/2$, $V_{1,1}=0$,
but internal predictions $V_{0,j}=1/2+(c-\varepsilon)Y$ for
$2\le j\le L$. Each internal value error has magnitude $\varepsilon$.
The two estimates relevant to the only nonzero policy score are
\begin{equation}
 A_0^{\mathrm{turn}}=cY,\qquad
 A_1^{\mathrm{GAE}}(q)
       =[c-\varepsilon(1-q^{L-1})]Y.
 \label{eq:internal-error-counterexample}
\end{equation}
When $0<c<\varepsilon(1-q^{L-1})$, increasing the token surrogate favors
the lower-reward action, whereas the macro surrogate favors the
higher-reward action. This holds for arbitrarily small policy updates
and with both critics' predictions in $[0,1]$, for example
$c=0.05$, $\varepsilon=0.2$, $q=0.8$, $L=4$.
The objective and boundary predictions are identical; the difference
comes from bootstrapping at internal prefixes.

\end{proposition}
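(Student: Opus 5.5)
The argument is a direct computation on the single-turn example, followed by a first-order comparison of the two surrogates in the first-token logit.

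\textbf{Turn estimate.} Here $T=0$, so Eq.~\eqref{eq:turn-td} gives $\delta_0^{\mathrm{turn}}=R+V_{1,1}-V_{0,1}=\tfrac12+cY-\tfrac12=cY$. Since $A_1^{\mathrm{turn}}=0$, Eq.~\eqref{eq:turn-gae} yields $A_0^{\mathrm{turn}}=cY$. This uses only boundary values, consistent with the remark after Eq.~\eqref{eq:turn-gae}.

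\textbf{Token GAE at the first token.} I will write down each token TD residual using the boundary convention in Eq.~\eqref{eq:value-boundary}:
\begin{align*}
\delta_{0,1}&=V_{0,2}-V_{0,1}=(c-\varepsilon)Y,\\
\delta_{0,j}&=0\quad\text{for }2\le j\le L-1,\\
\delta_{0,L}&=R+V_{1,1}-V_{0,L}=\varepsilon Y.
\end{align*}
The unrolled form in Eq.~\eqref{eq:gae-unrolled} then gives
\[
A_1^{\mathrm{GAE}}(q)=(c-\varepsilon)Y+q^{L-1}\varepsilon Y=[c-\varepsilon(1-q^{L-1})]Y,
\]
which is Eq.~\eqref{eq:internal-error-counterexample}. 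As a cross-check, this should agree with the bias formula in Proposition~\ref{prop:token-gae-error}, since the errors are $e_1=0$ and $e_j=-\varepsilon Y$ for $j\ge2$.

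\textbf{Direction of improvement.} The later tokens are deterministic, so their scores vanish. The only nonzero policy score is that of the first token, and the sign in the claim refers to this score.

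Parametrize the first-token law by $p=\pi'(Y=1)$. The old policy has $p=\tfrac12$, and importance weights are $2p$ and $2(1-p)$. Each surrogate has the form $\mathbb E_\pi[(\rho-1)\widehat A]$.
\begin{itemize}
\item For the macro surrogate this equals $(2p-1)c$.
\item For the token surrogate it equals $(2p-1)[c-\varepsilon(1-q^{L-1})]$, since the remaining ratios are $1$.
\end{itemize}
If $0<c<\varepsilon(1-q^{L-1})$, the macro coefficient is positive and the token coefficient is negative. Any arbitrarily small increase of either surrogate therefore moves $p$ in opposite directions. The token surrogate pushes probability toward $Y=-1$, whose reward $\tfrac12-c$ is the lower one.

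\textbf{Numeric example.} Take $c=0.05$, $\varepsilon=0.2$, $q=0.8$, $L=4$. Then $q^{3}=0.512$ and $\varepsilon(1-q^3)=0.0976>0.05$, so the condition holds. The predictions stay in $[0,1]$: they are $0.5$, $0.35$, $0.65$, and $0$.

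\textbf{Main obstacle.} The only delicate point is boundary bookkeeping. The last residual must bootstrap from $V_{1,1}=0$ and carry the reward $R$. Also, the internal predictions must be depend on $Y$ only through the prefix, which is legitimate because $Y$ is fixed by the first token.
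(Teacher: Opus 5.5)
Your proposal is correct and follows essentially the same route as the paper. It uses the same residuals $\delta_{0,1}=(c-\varepsilon)Y$ and $\delta_{0,L}=\varepsilon Y$, and the same linear dependence of both objectives on $2p-1=2u$. The paper compares the token surrogate directly against the true change $\mathcal J(\pi')-\mathcal J(\pi)=2cu$, which here equals your macro surrogate because $T=0$ and the boundary values are exact. It also adds that $D_{\mathrm{KL}}(\pi\Vert\pi')=-\tfrac12\log(1-4u^2)\to0$, but your sign argument, valid for every $u\neq0$, already covers arbitrarily small updates.
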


\begin{proof}[Proof of Proposition~\ref{prop:internal-error-example}]
Only the first and
last TD residuals are nonzero:
$\delta_1=(c-\varepsilon)Y$ and $\delta_L=\varepsilon Y$.
Their GAE sum is $(c-\varepsilon+q^{L-1}\varepsilon)Y$.
The macro residual is exactly $cY$. If the new probability of $Y=1$
is $1/2+u$, then
\[
 \mathcal J(\pi')-\mathcal J(\pi)=2cu,\qquad
 \widehat{\mathcal S}_T=2[c-\varepsilon(1-q^{L-1})]u.
\]
Choosing $u<0$ makes the second quantity positive and the first negative
under the stated condition. For the numerical values in the text and
$u=-0.1$, the surrogate increases by $0.00952$ and the return decreases
by $0.01$. The nonbaseline bias radius at the first token is
$\varepsilon(1-q^{L-1})$, and the other tokens have no policy change.
Its exact surrogate-error bound is $2|u|\varepsilon(1-q^{L-1})=0.01952$,
which accounts for this reversal. Letting $|u|$ approach zero also
satisfies arbitrarily small positive KL budgets: the only changing
conditional action law has
$D_{\mathrm{KL}}(\pi\Vert\pi')=-\tfrac12\log(1-4u^2)\to0$ as $u\to0$.
For the specified example the internal predictions are $0.35$ and $0.65$,
and $c-\varepsilon(1-q^{L-1})=-0.0476<0$, verifying the reward-range
and sign conditions explicitly.
\end{proof}

\subsection{Macro and Token Ratio Objectives}
\label{app:product-ratio-discrepancy}

Broadcasting a fixed scalar turn estimate $\widehat A_t$ gives the
token objective
\[
 \mathcal T(\widehat A)=\mathbb E_\pi\sum_t\sum_i
                  (\rho_{t,i}-1)\widehat A_t.
\]
\begin{proposition}[Local agreement of broadcast ratio objectives]
\label{prop:ratio-local-agreement}
In general, $\mathcal T(\widehat A)$ and $\widehat{\mathcal S}_M$ differ
at finite updates. Their gradients agree at the old policy when both use
the same frozen estimates and summation weights, token probabilities are
differentiable with common support, and differentiation may be interchanged
with the rollout expectation.
\end{proposition}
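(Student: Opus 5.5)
The proof has two parts: a counterexample showing the two objectives differ at finite updates, and a first-order computation showing their gradients coincide at $\theta$ with $\pi_\theta=\pi$. Throughout, $\widehat A_t$ is a frozen scalar that is a function of the old-policy rollout, and both objectives are expectations under the fixed rollout law $\pi$. Consequently the only $\theta$-dependence sits in the ratios $\rho_{t,i}(\theta)$ and $\rho_t(\theta)=\prod_i\rho_{t,i}(\theta)$.

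\emph{Finite-update discrepancy.} I will exhibit a minimal instance. Take one turn ($T=0$) of two tokens, each with binary choices, and let the estimate be $\widehat A_0\equiv 1$ on every output. Then
\[
\widehat{\mathcal S}_M=\mathbb E_\pi[\rho_0-1]=0
\]
by normalization of $\pi'$ under common support. On the other hand,
\[
\mathcal T(\widehat A)=\mathbb E_\pi[\rho_{0,1}+\rho_{0,2}-2]=0
\]
as well, so a constant estimate does not separate them. I will therefore use a nonconstant estimate instead: set $\widehat A_0=\mathbf 1\{a_{0,1}=a_{0,2}=1\}$.

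With $\pi$ uniform and $\pi'$ choosing token $1$ with probability $p$ at both positions independently of the prefix, we get $\widehat{\mathcal S}_M=p^2-\tfrac14$. Only the single outcome $(1,1)$ carries nonzero weight in $\mathcal T$, so
\[
\mathcal T(\widehat A)=\tfrac14\bigl(\rho_{0,1}(1)+\rho_{0,2}(1)-2\bigr)=\tfrac14(4p-2)=p-\tfrac12 .
\]
At $p=1$ this gives $\tfrac34\neq\tfrac12$, which establishes the ``in general'' claim. More conceptually, $\prod_i\rho_{t,i}-1$ and $\sum_i(\rho_{t,i}-1)$ agree only to first order; they differ by second-order cross terms of the form $\sum_{i<j}(\rho_{t,i}-1)(\rho_{t,j}-1)+\cdots$.

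\emph{Gradient agreement.} I will differentiate under the expectation, which the statement allows. Since $\widehat A_t$ and the sampling law are frozen,
\[
\nabla_\theta\widehat{\mathcal S}_M=\mathbb E_\pi\sum_t\widehat A_t\,\nabla_\theta\rho_t(\theta),\qquad \nabla_\theta\mathcal T=\mathbb E_\pi\sum_t\widehat A_t\sum_i\nabla_\theta\rho_{t,i}(\theta).
\]
Differentiating the product gives $\nabla\rho_t=\rho_t\nabla\log\pi_\theta(a_t\mid h_t)$, and by Lemma~\ref{lem:objective-consistency} (Eq.~\eqref{eq:macro-score}) this equals $\rho_t\sum_i\nabla\log\pi_\theta(a_{t,i}\mid h_{t,i})$. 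Similarly $\nabla\rho_{t,i}=\rho_{t,i}\nabla\log\pi_\theta(a_{t,i}\mid h_{t,i})$. At the old policy every ratio equals one, so both integrands reduce pathwise to $\widehat A_t\sum_i\nabla\log\pi_\theta(a_{t,i}\mid h_{t,i})$, and their expectations coincide.

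The step needing care is ``same summation weights'': if $\mathcal T$ uses a per-token average such as $\widehat{\mathbb E}_{t,i}$ while $\widehat{\mathcal S}_M$ uses a per-turn sum, the gradients would differ by a factor $L_t$. I will therefore state explicitly that the turn objective is taken with the same per-turn aggregation, and check that the argument is unchanged when matched weights $w_t$ multiply both sides. This bookkeeping, together with justifying the interchange of differentiation and expectation, is the only real obstacle; the rest is the product rule combined with Eq.~\eqref{eq:macro-score}.
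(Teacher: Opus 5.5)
Your proposal is correct and follows essentially the paper's route: the gradient match is the product rule evaluated at unit ratios (you phrase it through log-derivatives and Eq.~\eqref{eq:macro-score}). The finite-update gap comes, as in the paper, from a single two-token turn whose estimate couples the tokens so that the cross term $(\rho_{0,1}-1)(\rho_{0,2}-1)$ survives; the paper uses independent signs with $\widehat A=XY$, giving $\mathcal T=0$ versus $\widehat{\mathcal S}_M=4uv$, while your indicator example gives a gap of $(p-\tfrac12)^2$, so any $p\in(0,1)$ with $p\neq\tfrac12$ also works and keeps common support.
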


\begin{proof}[Proof of Proposition~\ref{prop:ratio-local-agreement}]
Let $\theta_0$ parameterize the old policy, and let $\rho_{t,i}(\theta)$
be the token ratios on a fixed old-policy rollout. The macro ratio is
$\rho_t(\theta)=\prod_i\rho_{t,i}(\theta)$, and every token ratio is one
at $\theta_0$. The finite-product rule gives
\[
 \left.\nabla_\theta(\rho_t(\theta)-1)\right|_{\theta_0}
 =\left.\sum_i\left(\prod_{j\ne i}\rho_{t,j}(\theta)\right)
                       \nabla_\theta\rho_{t,i}(\theta)\right|_{\theta_0}
 =\left.\nabla_\theta\sum_i(\rho_{t,i}(\theta)-1)\right|_{\theta_0}.
\]
Multiplying by the frozen $\widehat A_t$, summing over turns, and taking
the fixed old-policy expectation proves gradient equality under the stated
interchange assumption (for example, integrably dominated gradients suffice).

For finite updates the omitted product terms need not cancel even in
expectation. Consider one turn with two independent fair signs $X,Y$ under
$\pi$, a frozen estimate $\widehat A=XY$, and new probabilities
$\pi'(X=1)=1/2+u$, $\pi'(Y=1)=1/2+v$, where $|u|,|v|<1/2$.
Then $\rho_1=1+2uX$ and $\rho_2=1+2vY$. Independence and zero means give
\[
 \mathcal T=\mathbb E[(2uX+2vY)XY]=0,\qquad
 \widehat{\mathcal S}_M
 =\mathbb E[(2uX+2vY+4uvXY)XY]=4uv.
\]
These differ whenever $uv\ne0$, as claimed.
\end{proof}

PPO clipping further changes the objective and does not impose a hard
macro trust region \citep{schulman2017ppo}. The proposition concerns the
unclipped objectives and does not assert finite-update equivalence.

\section{Implementation and Ablation Settings}
\label{app:mae-settings}

\paragraph{Training data.}
For BCP, we follow the training-data setup of SUPO~\citep{lu2025supo}.
The training data for Claw and Automation are proprietary internal
company datasets; their specific details cannot be disclosed.

\begin{table}[!htbp]
 \centering
 \caption{RL training hyperparameters for BCP, Claw, and Automation.
 The BCP settings are shared by the 4B and 30B-A3B runs. Length limits
 are measured in tokens. The \texttt{max\_turn} row corresponds to
 \texttt{max\_assistant\_turns} for BCP, \texttt{max\_turns} for Claw,
 and \texttt{max\_steps} for Automation.}
 \label{tab:rl-training-hyperparameters}
 \renewcommand{\arraystretch}{1.15}
 \begin{tabular*}{0.85\textwidth}{@{\extracolsep{\fill}}lccc@{}}
  \toprule
  Hyperparameter & BCP & Claw & Automation \\
  \midrule
  \texttt{max\_prompt\_length} & 4096 & 8000 & 8000 \\
  \texttt{max\_response\_length} & 32768 & 64000 & 64000 \\
  \texttt{train\_batch\_size} & 32 & 64 & 64 \\
  Actor \texttt{optim.lr} & $10^{-6}$ & $1.6\times10^{-6}$ & $1.6\times10^{-6}$ \\
  Critic \texttt{optim.lr} & $10^{-5}$ & $10^{-5}$ & $10^{-5}$ \\
  \texttt{rollout.n} & 8 & 8 & 8 \\
  \texttt{max\_turn} & 20 & 20 & 50 \\
  \bottomrule
 \end{tabular*}
\end{table}

\paragraph{Computing the two traces.}
The turn anchor can be computed by broadcasting $V_{t,1}$ within
each turn and running a token GAE scan with decay $1$ internally
and $\lambda$ at the final token. Internal TD errors then vanish,
and the turn-end residual equals Eq.~\eqref{eq:turn-td}.
The local branch uses the original token values and the backward recursion
\begin{equation}
 A_{t,i}^{\mathrm{intra}}=
 \begin{cases}
  \delta_{t,i}+\lambda_{\mathrm{intra}}A_{t,i+1}^{\mathrm{intra}},
    &i<L_t,\\
  \delta_{t,L_t},&i=L_t.
 \end{cases}
 \label{eq:intra-recursion}
\end{equation}
The within-turn mean used for centering is
\begin{equation}
 \overline A_t^{\mathrm{intra}}
 =\frac{1}{L_t}\sum_{i=1}^{L_t}A_{t,i}^{\mathrm{intra}},
 \label{eq:intra-mean}
\end{equation}

\paragraph{Critic target.}
We train the critic by directly regressing the final Monte Carlo return:
$\widehat R_{t,i}=R(\tau)$ at every valid assistant token.
The target is detached, and each prediction is conditioned on the
corresponding token prefix.

\begin{algorithm}[htbp]
 \caption{Multiscale Advantage Estimation}
 \label{alg:mae}
 \begin{algorithmic}[1]
  \REQUIRE Trajectory $\tau=(o_0,a_0,\ldots,o_T,a_T)$, values $V_{t,i}$,
  terminal reward $R(\tau)$;
  $\lambda$, $\lambda_{\mathrm{intra}}$, $\beta$
  \STATE Set $V_{T+1,1}=0$ and $A_{T+1}^{\mathrm{turn}}=0$.
  \STATE Read $L_t=|a_t|$; assign $r_{t,i}$ using
  Eq.~\eqref{eq:terminal-reward}.
  \STATE Compute token TD errors $\delta_{t,i}$ using
  Eqs.~\eqref{eq:value-boundary} and~\eqref{eq:td}.
  \FOR{$t=T,\ldots,0$}
   \STATE Compute $\delta_t^{\mathrm{turn}}$ and $A_t^{\mathrm{turn}}$
   using Eqs.~\eqref{eq:turn-td}--\eqref{eq:turn-gae}.
   \STATE Compute $A_{t,i}^{\mathrm{intra}}$ for $i=L_t,\ldots,1$ using
   Eq.~\eqref{eq:intra-recursion}.
   \STATE Center and combine using Eqs.~\eqref{eq:intra-mean} and~\eqref{eq:mae}.
  \ENDFOR
  \STATE Set $\widehat R_{t,i}=R(\tau)$ at every valid assistant token.
  \STATE Normalize $A_{t,i}^{\mathrm{MAE}}$ over valid batch tokens.
  \STATE \textbf{return} normalized advantages and critic targets $\widehat R_{t,i}$.
 \end{algorithmic}
\end{algorithm}

\paragraph{Configuration.}
Our reference configuration uses $\lambda=0.8$,
$\lambda_{\mathrm{intra}}=0.95$, $\lambda_{\mathrm{critic}}=1$, and $\beta=1$.
We use $\gamma=1$, an explicit assistant action mask, and the terminal
reward placement in Eq.~\eqref{eq:terminal-reward}; dense token-level KL
rewards are excluded.
All traces and turn reductions take linear time in the stored trajectory
length. Algorithm~\ref{alg:mae} summarizes the estimator.

\paragraph{Advantage normalization.}
The implementation subtracts the batch mean and divides by the square
root of the sample variance plus $10^{-8}$. Masked positions are zeroed;
batches with at most one valid action receive zero policy advantages.

\paragraph{Parameter ablations.}
The 4B Search ablations in Tables~\ref{tab:ablation-credit}
and~\ref{tab:ablation-beta} vary one parameter at a time around
$(\lambda,\lambda_{\mathrm{intra}},\beta)=(0.80,0.95,1.00)$,
with $\gamma=1$ and $\lambda_{\mathrm{critic}}=1$ fixed.
The evaluated settings are $\lambda\in\{0.60,0.80,0.85\}$,
$\lambda_{\mathrm{intra}}\in\{0.90,0.95,1.00\}$, and
$\beta\in\{0.0,0.5,1.0,1.5,2.0\}$.
Each configuration uses one training run. We select its best-performing
checkpoint and report the mean of five repeated offline evaluations,
following the protocol in Section~\ref{sec:experimental-settings}.
At $\beta=0$, only turn-level GAE remains and
$\lambda_{\mathrm{intra}}$ has no effect on the actor credit.

\section{Critic Values Across Turns}
\label{app:critic-observation}

Figure~\ref{fig:critic-value-population} summarizes the original
assistant-token critic predictions for all 2,048 trajectories and
16,323 turns from eight checkpoints of one MPO 4B search run.

\paragraph{Value differences are primarily between turns.}
Panel (a) compares variation around each turn's mean with variation
of turn means around the trajectory mean. Most points lie above the
equality line: between-turn variation exceeds within-turn variation
in $93.8\%$ of trajectories. Panel (b) shows that this pattern persists
across checkpoints. The median share of variance explained by
between-turn differences is $89.5\%$ when weighting by token count
and $90.2\%$ when weighting turns equally, indicating that the pattern
is not driven solely by differences in turn length.

\paragraph{Local token variation remains relevant.}
Panel (c) compares jumps from the last token value of one turn to the
first of the next with the within-turn P95--P05 value span.
The mean boundary jump exceeds the mean within-turn span in only
$22.4\%$ of trajectories. Panel (d) further shows that boundaries
account for $9.2\%$ of pooled squared adjacent value changes;
small within-turn fluctuations accumulate over many more tokens.
Thus, differences in turn-level value do not imply that most local
value movement occurs at boundaries. Together, these observations
motivate organizing long-range credit at the turn scale while
retaining token-level refinement within each turn.

\begin{figure}[htbp]
 \centering
 \includegraphics[width=0.98\textwidth]{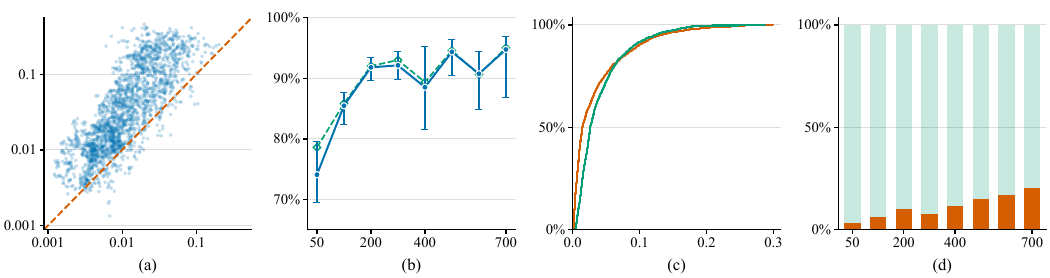}
 \caption{\textbf{Critic-value structure across turns.}
 (a) Within-turn (horizontal) versus between-turn (vertical) RMS
 deviations, weighted by token count. Each point is a trajectory;
 the dashed line marks equality.
 (b) Median between-turn variance share versus training step, with
 token weighting (blue) and equal turn weighting (green).
 Error bars are 95\% prompt-cluster bootstrap intervals within each checkpoint.
 (c) Cumulative distributions of per-trajectory mean absolute boundary
 jumps (orange) and mean within-turn P95--P05 spans (green).
 (d) Shares of pooled squared adjacent value changes at turn boundaries
 (orange) and within turns (pale green), by training step.}
 \label{fig:critic-value-population}
\end{figure}

\clearpage
\subsection{More Examples}
\label{app:critic-more-examples}

Figure~\ref{fig:critic-more-examples} shows six additional trajectories
with pronounced critic-value changes at multiple turn boundaries,
complementing the examples in Figure~\ref{fig:critic-turn-motivation}.

\begin{figure}[!htbp]
 \centering
 \includegraphics[width=\textwidth]{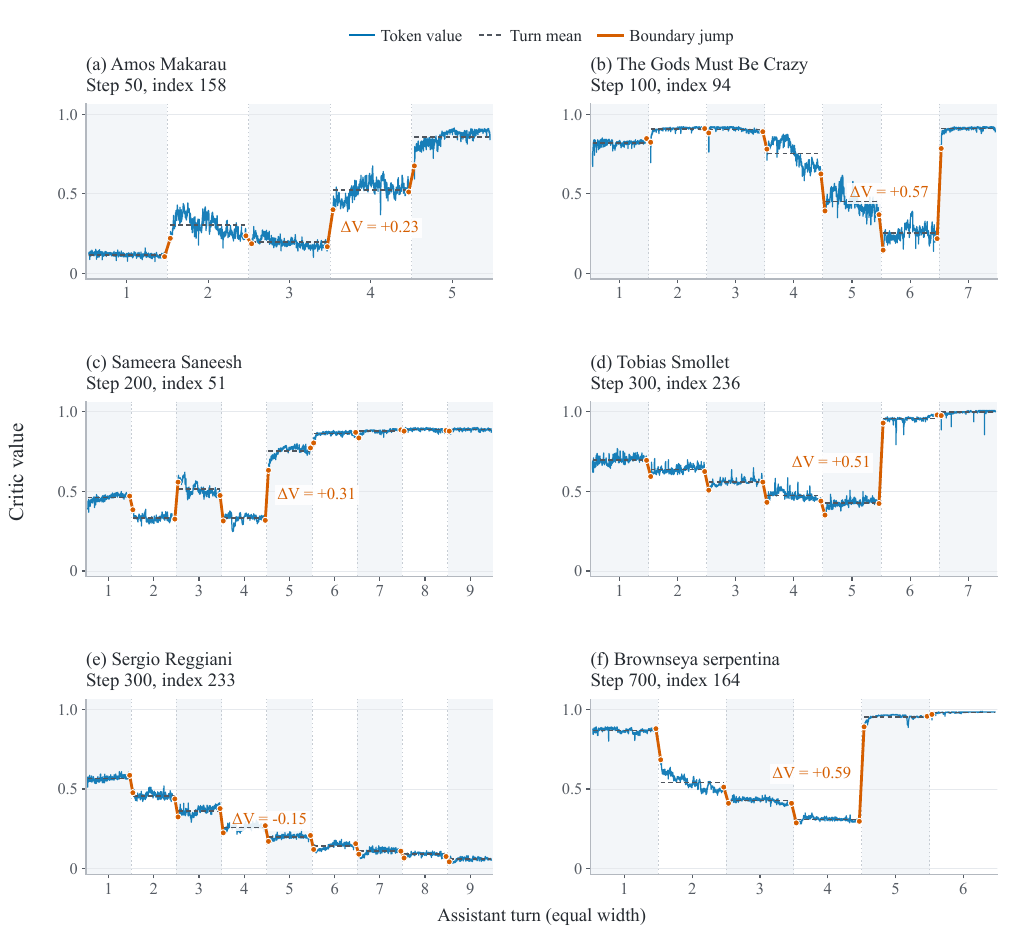}
 \caption{\textbf{More examples of critic-value structure across turns.}
 Blue curves show raw token values, dashed lines show turn means, and
 orange segments mark changes between successive turns.
 As in Figure~\ref{fig:critic-turn-motivation}, turns have
 equal display width and curves are unsmoothed; only assistant-action
 positions are included, with no appended terminal zero.}
 \label{fig:critic-more-examples}
\end{figure}

\clearpage
\section{Validation Curves for Parameter Ablations}
\label{app:ablation-validation-curves}

Figures~\ref{fig:ablation-val-beta}--\ref{fig:ablation-val-intra-lambda}
compare 4B BrowseComp-Plus validation pass@1 through training step 300.
Red curves show the MPO reference with
$(\lambda,\lambda_{\mathrm{intra}},\beta)=(0.80,0.95,1.00)$;
blue curves show one-parameter ablations. All panels use the same
axis ranges and display unsmoothed validation measurements.

\begingroup
\setlength{\intextsep}{6pt plus 1pt minus 1pt}
\setlength{\floatsep}{6pt plus 1pt minus 1pt}
\begin{figure}[!htbp]
 \centering
 \includegraphics[width=0.32\textwidth]{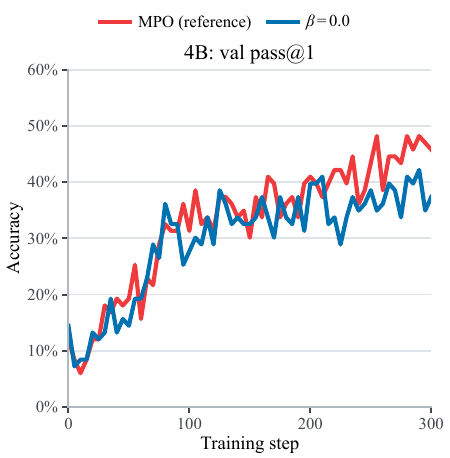}
 \hspace{0.04\textwidth}
 \includegraphics[width=0.32\textwidth]{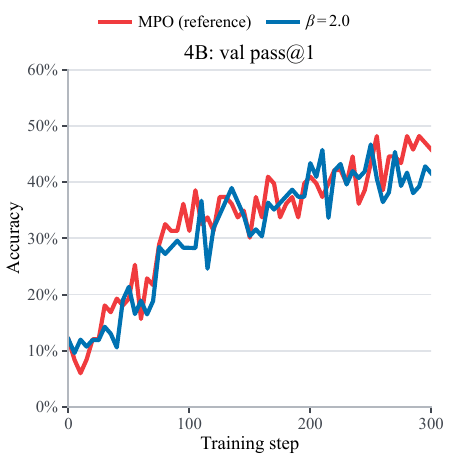}
 \caption{Local refinement strength $\beta$: 0.0 (left) and 2.0 (right),
 compared with the MPO reference.}
 \label{fig:ablation-val-beta}
\end{figure}

\begin{figure}[!htbp]
 \centering
 \includegraphics[width=0.32\textwidth]{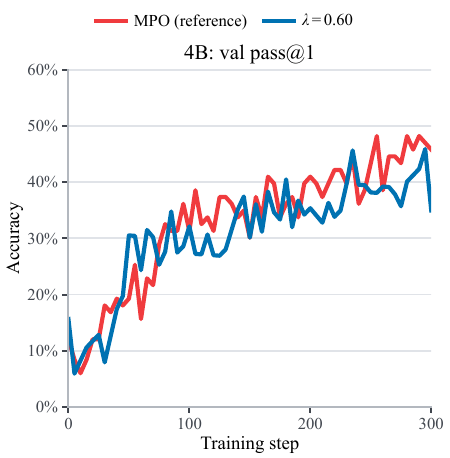}
 \hspace{0.04\textwidth}
 \includegraphics[width=0.32\textwidth]{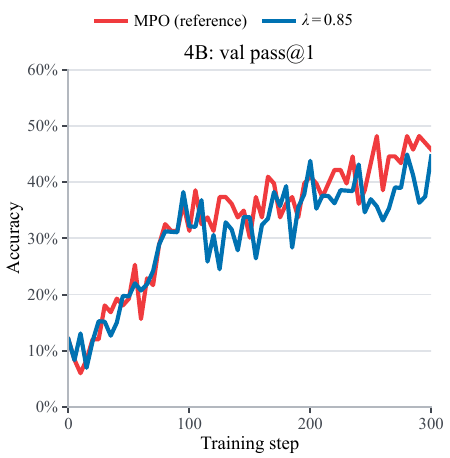}
 \caption{Cross-turn propagation $\lambda$: 0.60 (left) and 0.85 (right),
 compared with the MPO reference.}
 \label{fig:ablation-val-lambda}
\end{figure}

\begin{figure}[!htbp]
 \centering
 \includegraphics[width=0.32\textwidth]{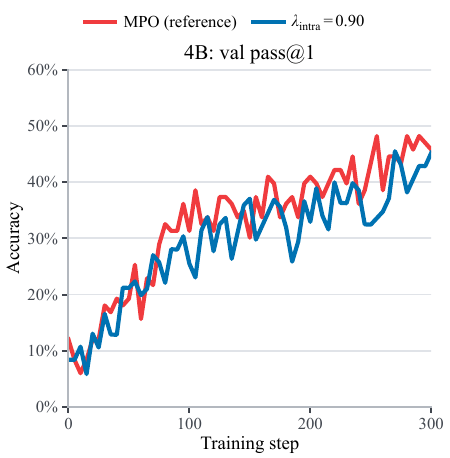}
 \hspace{0.04\textwidth}
 \includegraphics[width=0.32\textwidth]{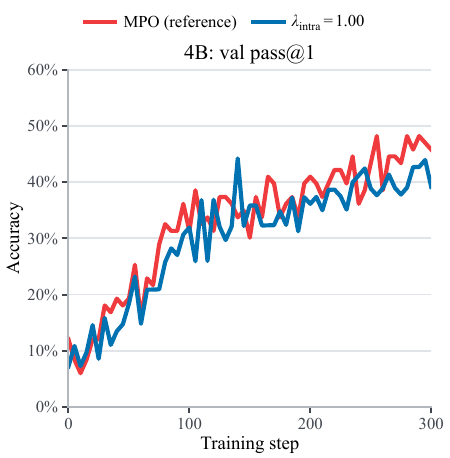}
 \caption{Within-turn trace $\lambda_{\mathrm{intra}}$: 0.90 (left) and
 1.00 (right), compared with the MPO reference.}
 \label{fig:ablation-val-intra-lambda}
\end{figure}
\FloatBarrier
\endgroup